 \documentclass[accepted]{uai2023} % after acceptance, for a revised
                                    % version; also before submission to
                                    % see how the non-anonymous paper
                                    % would look like
%% There is a class option to choose the math font
% \documentclass[mathfont=ptmx]{uai2023} % ptmx math instead of Computer
                                         % Modern (has noticable issues)
% \documentclass[mathfont=newtx]{uai2023} % newtx fonts (improves upon
                                          % ptmx; less tested, no support)
% NOTE: Only keep *one* line above as appropriate, as it will be replaced
%       automatically for papers to be published. Do not make any other
%       change above this note for an accepted version.

%% Choose your variant of English; be consistent
\usepackage[american]{babel}
% \usepackage[british]{babel}

%% Some suggested packages, as needed:
\usepackage{natbib} % has a nice set of citation styles and commands
    \bibliographystyle{plainnat}
    
\usepackage{mathtools} % amsmath with fixes and additions
\usepackage{booktabs} % commands to create good-looking tables
\usepackage{tikz} % nice language for creating drawings and diagrams

% For theorems and such
\usepackage{amsmath}
\usepackage{amssymb}
\usepackage{mathtools}
\usepackage{amsthm}
\usepackage{hhline}
\usepackage{caption}
\usepackage{enumitem}

\usepackage{xcolor,pifont}
\newcommand*\colourcheck[1]{%
  \expandafter\newcommand\csname #1check\endcsname{\textcolor{#1}{\ding{52}}}%
}
\colourcheck{green}

\newcommand*\colourcheckk[1]{%
  \expandafter\newcommand\csname #1times\endcsname{\textcolor{#1}{\ding{54}}}%
}
\colourcheckk{red}

%% Provided macros
% \smaller: Because the class footnote size is essentially LaTeX's \small,
%           redefining \footnotesize, we provide the original \footnotesize
%           using this macro.
%           (Use only sparingly, e.g., in drawings, as it is quite small.)

%% Self-defined macros
 % just an example

\DeclareMathOperator*{\argmax}{arg\,max}
\DeclareMathOperator*{\argmin}{arg\,min}

\theoremstyle{plain}
\newtheorem{theorem}{Theorem}

\newtheorem{proposition}{Proposition}

\newtheorem{definition}{Definition}

\newtheorem{example}{Example}

\setlength{\belowcaptionskip}{-10pt}

\title{Is the Volume of a Credal Set a Good Measure for Epistemic Uncertainty?}

% The standard author block has changed for UAI 2023 to provide
% more space for long author lists and allow for complex affiliations
%
% All author information is authomatically removed by the class for the
% anonymous submission version of your paper, so you can already add your
% information below.
%
% Add authors
\author[1,3]{\href{mailto:<yusuf.sale@ifi.lmu.de>?Subject=Your UAI 2023 paper}{Yusuf Sale}{}}
\author[2]{Michele Caprio}
\author[1,3]{Eyke Hüllermeier}
%\author[1]{Further~Coauthor}
%\author[3]{Further~Coauthor}
%\author[3,1]{Further~Coauthor}
% Add affiliations after the authors
\affil[1]{%
Institute of Informatics\\
University of Munich (LMU)\\
Germany
}

\affil[2]{%
    PRECISE Center\\
    Department of Computer and Information Science\\
    University of Pennsylvania\\
    USA
  }

\affil[3]{%
    Munich Center for Machine Learning\\
    Germany

}
  
  \begin{document}
\maketitle

\begin{abstract}
Adequate uncertainty representation and quantification have become imperative in various scientific disciplines, especially in machine learning and artificial intelligence. As an alternative to representing uncertainty via one single probability measure, we consider credal sets (convex sets of probability measures). The geometric representation of credal sets as $d$-dimensional polytopes implies a geometric intuition about (epistemic) uncertainty. In this paper, we show that the volume of the geometric representation of a credal set is a meaningful measure of epistemic uncertainty in the case of binary classification, but less so for multi-class classification. Our theoretical findings highlight the crucial role of specifying and employing  uncertainty measures in machine learning in an appropriate way, and for being aware of possible pitfalls. 
\end{abstract}

\section{Introduction}\label{sec:intro}
The notion of uncertainty has recently drawn increasing attention in machine learning (ML) and artificial intelligence (AI) due to the fields' burgeoning relevance for practical applications, many of which have safety requirements, such as in medical domains \citep{lambrou2010reliable, senge_2014_ReliableClassificationLearning, yang2009using} or socio-technical systems \citep{varshney2016engineering,varshney2017safety}. These applications to safety-critical contexts show that a suitable representation and quantification of uncertainty for modern, reliable machine learning systems is imperative.

In general, the literature makes a distinction between \textit{aleatoric} and \textit{epistemic} uncertainties (AU and EU, respectively) \citep{hora1996aleatory}. While the former is caused by the inherent randomness of the data-generating process, EU results from the learner's lack of knowledge regarding the true underlying model; it also includes approximation uncertainty. Since EU can be reduced per se with further information (e.g., via data augmentation using semantic preserving transformations), it is also referred to as reducible uncertainty. 
%This definition is also in line with Ellsberg's notion of ambiguity \citep{ellsberg}.
% Given its definition, \redc{we do not distinguish between the notion of EU and Ellsberg's concept of ambiguity \citep{ellsberg}.} 
In contrast, aleatoric uncertainty, as a property of the data-generating process, is irreducible \citep{hullermeier2021aleatoric}. The importance of distinguishing between different types of uncertainty is reflected in several areas of recent machine learning research, e.g. in Bayesian deep learning \citep{depeweg2018decomposition,kendall2017uncertainties}, in adversarial example detection \citep{smith2018understanding}, or data augmentation in Bayesian classification \citep{kapoor2022uncertainty}. A qualitative representation of total uncertainty, AU, and EU, and of their asymptotic behavior as the number of data points available to the learning agent increases, is given in Figure \ref{fig3}.

\begin{figure}[h!]
\centering
\includegraphics[width=.48\textwidth]{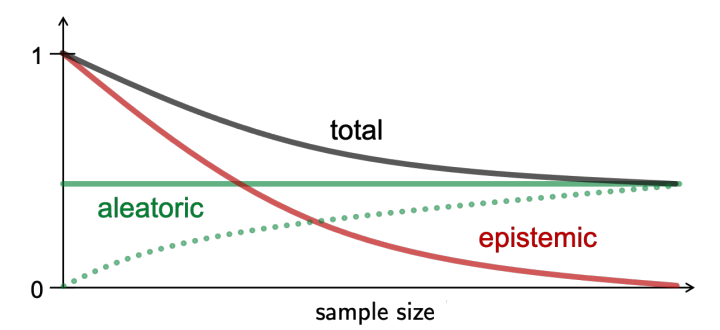}
\caption{Qualitative behavior of total, aleatoric, and epistemic uncertainties depending on the sample size. The dotted line is the difference between total and epistemic uncertainties. This figure replicates \cite[Figure 3]{eyke2}.}
\label{fig3}
\centering
\end{figure}

Typically, uncertainty in machine learning, artificial intelligence, and related fields is expressed solely in terms of probability theory. That is, given a measurable space $(\Omega, \mathcal{A})$, uncertainty is entirely represented by defining one single probability measure $P$ on $(\Omega, \mathcal{A})$. However, representing uncertainty in machine learning is not restricted to classical probability theory; various aspects of uncertainty representation and quantification in ML are discussed by \citet{hullermeier2021aleatoric}. \textit{Credal sets}, i.e., (convex) sets of probability measures, are considered to be very popular models of uncertainty representation, especially in the field of \textit{imprecise probabilities} (IP) \citep{augustin2014introduction,walley}. Credal sets are also very appealing from an ML perspective for representing uncertainty, as they can represent both aleatoric and epistemic uncertainty (as opposed to a single probability measure). Numerous scholars emphasized the utility of representing uncertainty in ML via credal sets, e.g., \textit{credal classification} \citep{zaffalon2002naive, corani2008learning} based on the Imprecise Dirichlet Model (IDM) \citep{walley1996inferences}, generalizing \textit{Bayesian networks} to \textit{credal classifiers} \citep{corani2012bayesian}, or building \textit{credal decision-trees} \citep{abellan2003building}.

Uncertainty representation via credal sets also requires a corresponding \textit{quantification} of the underlying uncertainty, referred to as \textit{credal uncertainty quantification} (CUQ). The task of (credal) uncertainty quantification translates to finding a suitable measure that can accurately reflect the uncertainty inherent to a credal set. In many ML applications, such as active learning \citep{settles2009active} or classification with abstention, there is a need to quantify (predictive) uncertainty in a scalar way. Appropriate measures of uncertainty are often axiomatically justified \citep{bronevich2008axioms, bronevich2010measures}.

\textbf{Contributions.} In this work, we consider the volume of the geometric representation of a credal set on the label space as a quite obvious and intuitively plausible  measure of EU. We argue that this measure is indeed meaningful if we are in a binary classification setting. However, in a multi-class setting, the volume exhibits shortcomings that make it unsuitable for quantifying EU associated with a credal set.

\textbf{Structure of the paper.} The paper is divided as follows. Section \ref{uncertainty} formally introduces the framework we work in, and Section \ref{credal uncertainty quantification} discusses the related literature. Section \ref{geometry of credal uncertainty} presents our main findings, which are further discussed in Section \ref{concl}. Proofs of our theoretical results are given in Appendix \ref{proofs}, and (a version of) Carl-Pajor's theorem, intimately related to Theorem \ref{main_theorem}, is stated in Appendix \ref{hdp}.

%appendix \ref{proofs}

\section{Uncertainty in ML and AI}
\label{uncertainty}
Uncertainty is a crucial concept in many academic and applied disciplines. However, since its definition depends on the specific context a scholar works in, we now introduce the formal framework of supervised learning within which we will examine it.

Let $(\mathcal{X}, \sigma(\mathcal{X}))$, and $(\mathcal{Y}, \sigma(\mathcal{Y}))$ be two measurable spaces, where $\sigma(\mathcal{X})$, and $\sigma(\mathcal{Y})$ are suitable $\sigma$-algebras.
%where $\sigma(\cdot)$ denotes the usual %$\sigma$-operator. 
We will refer to $\mathcal{X}$ as \textit{instance space} (or equivalently, input space) and to $\mathcal{Y}$ as \textit{label space}. Further, the sequence $\{ ({x}_i, y_i )  \}_{i = 1}^n \in (\mathcal{X} \times \mathcal{Y})^n$, is called \textit{training data}. The pairs $(x_i, y_i)$ are realizations of random variables $(X_i, Y_i)$, which are assumed independent and identically distributed (i.i.d.) according to some probability measure $P$ on $(\mathcal{X} \times \mathcal{Y}, \sigma(\mathcal{X} \times \mathcal{Y} ) )$. 

\begin{definition}[Credal set]
\label{definition credal set}
Let $(\Omega, \mathcal{A})$ be a generic measurable space and denote by $\mathcal{M}(\Omega, \mathcal{A})$ the set of all (countably additive) probability measures on $(\Omega, \mathcal{A})$. A convex subset $\mathcal{P} \subseteq \mathcal{M}(\Omega, \mathcal{A})$ is called a \textit{credal set}. 
%If it has finitely many extreme points, it is called a \textit{finitely generated %credal set}.
\end{definition}

Note that in Definition \ref{definition credal set}, the assumption of convexity is quite natural and considered to be rational (see, e.g., \citet{levi2}).
%{\color{blue} MC: here rational is not well defined, and it may incur critiques from %the reviewers. In addition, I'd add a citation on the assumption of convexity being %natural, like a well-known paper that does that.} 
It is also mathematically appealing, since, as shown by \citet[Section 3.3.3]{walley}, the ``lower boundary'' $\underline{P}$ of $\mathcal{P}$, defined as $\underline{P}(A)\coloneqq\inf_{P\in\mathcal{P}}P(A)$, for all $A\in\mathcal{A}$ and called the \textit{lower probability} associated with $\mathcal{P}$, is coherent \citep[Section 2.5]{walley}.

Further, in a supervised learning setting, we assume a \textit{hypothesis space} $\mathcal{H}$, where each hypothesis $h \in \mathcal{H}$ maps a query instance $\mathbf{x_q}$ to a probability measure $P$ on $(\mathcal{Y}, \sigma(\mathcal{Y}))$. We distinguish between different ``degrees'' of uncertainty-aware predictions, which are depicted in Table \ref{table-1}. 
\vspace{-3mm}
\begin{center}
\begin{table}[h]
    \begin{tabular}{ | l | l |  p{2.1cm} |}
    \hline
   \textbf{ Predictor} & \textbf{AU aware?} & \textbf{EU aware?} \\ \hhline{|=|=|=|}
    \small{\vtop{\hbox{\strut Hard label prediction:}\hbox{\strut $h: \mathcal{X} \longrightarrow \mathcal{Y}$}}} & \redtimes & \redtimes  \\ \hline
    \small{\vtop{\hbox{\strut Probabilistic prediction:}\hbox{\strut $h: \mathcal{X} \longrightarrow \mathcal{M}(\mathcal{Y}, \sigma(\mathcal{Y})) $}}} & \greencheck & \redtimes \\ \hline
     \small{\vtop{\hbox{\strut Credal prediction:}\hbox{\strut $h: \mathcal{X} \longrightarrow \text{Cr}(\mathcal{Y})$}}} & \greencheck & \greencheck \\
    \hline
    \end{tabular}
    \caption{Aleatoric uncertainty (AU) and epistemic uncertainty (EU) awareness of different predictors.}
    \label{table-1}
\end{table}
\end{center}
\vspace{-8mm}
We denote by $\text{Cr}(\mathcal{Y})$ the set of all credal sets on $(\mathcal{Y}, \sigma(\mathcal{Y}))$. While probabilistic predictions $h(\mathbf{x_q}) = \hat{y}$ fail to capture the epistemic part of the (predictive) uncertainty, predictions in the form of credal sets $h(\mathbf{x_q}) = \mathcal{P} \subseteq \mathcal{M}(\mathcal{Y}, \sigma(\mathcal{Y}))$ account for both types of uncertainty. It should also be remarked that representing uncertainty is not restricted to the credal set formalism. Another possible framework to represent AU and EU is that of second-order distributions; they are commonly applied in Bayesian learning and  have been recently inspected in the context of uncertainty quantification by \cite{bengs2022pitfalls}. 

In this paper, we restrict our attention to the credal set representation. Given a credal prediction set, it remains to properly quantify the uncertainty encapsulated in it using a suitable measure. Credal set representations are often illustrated in low dimensions (usually $d =2$ or $d=3$). 
%{\color{blue} MC: citation needed here; it has to be a well-known paper that makes %example in a low-dimensional label space.} 
Examples of such geometrical illustrations can be found in the context of machine learning in \citep{hullermeier2021aleatoric} and in imprecise probability theory in \cite[Chapter 4]{walley}. This suggests that a credal set and its geometric representation are strictly intertwined. We will show in the following sections that this intuitive view can have disastrous consequences in higher dimensions and that one should exercise caution in this respect. Furthermore, it remains to be discussed whether a geometric viewpoint on (predictive) uncertainty quantification is in fact sensible. 

\section{Measures of Credal Uncertainty}
\label{credal uncertainty quantification}
In this section we examine some axiomatically defined properties of (credal) uncertainty measures. For a more detailed discussion of various (credal) uncertainty measures in machine learning and a critical analysis thereof, we refer to \cite{hullermeier2022quantification}. 

%This section examines well-received credal uncertainty measures and discusses some %of their desirable properties. 
%This section discusses some desired properties of credal uncertainty measures and briefly examines such well-received measures. 

Let $S$ denote the Shannon entropy \citep{shannon1948mathematical}, whose discrete version is defined as
\begin{align*}
S: \mathcal{M}(\mathcal{Y}, \sigma(\mathcal{Y})) &\rightarrow \mathbb{R}, \\
P &\mapsto  S(P) \coloneqq - \sum_{y \in \mathcal{Y}} P(\{y \}) \log_2 P(\{y\}).
\end{align*}
A suitable measure of credal uncertainty $U:\text{Cr}(\mathcal{Y}) \rightarrow \mathbb{R}$ should satisfy the following axioms proposed by \cite{abellan3,jiro}:
\begin{itemize}
    \item[A1] \textit{Non-negativity} and \textit{boundedness}: 
    \begin{itemize}
        \item[(i)] $U(\mathcal{P}) \geq 0$, for all $\mathcal{P} \in \text{Cr}(\mathcal{Y})$;
        \item[(ii)] there exists $u\in\mathbb{R}$ such that $U(\mathcal{P}) \leq u$, for all $\mathcal{P} \in \text{Cr}(\mathcal{Y})$.
    \end{itemize}
    \item[A2] \textit{Continuity}: $U$ is a continuous functional.
    \item[A3] \textit{Monotonicity}: for all $\mathcal{Q}, \mathcal{P} \in \text{Cr}(\mathcal{Y})$ such that $\mathcal{Q}\subset \mathcal{P}$, we have $U(\mathcal{Q}) \leq U(\mathcal{P})$.
    \item[A4] \textit{Probability consistency}: for all $\mathcal{P} \in \text{Cr}(\mathcal{Y})$ such that $\mathcal{P} = \{P \}$, we have $U(\mathcal{P}) = S(P)$.
    \item[A5] \textit{Sub-additivity}: Suppose $\mathcal{Y} = \mathcal{Y}_1 \times \mathcal{Y}_2$, and let $\mathcal{P}$ be a joint credal set on $\mathcal{Y}$ such that $\mathcal{P}^\prime$ is the marginal credal set on $\mathcal{Y}_1$ and $\mathcal{P}^{\prime\prime}$ is the marginal credal set on $\mathcal{Y}_2$, respectively. Then, we have
        \begin{equation}\label{add_eq}
            U(\mathcal{P}) \leq U(\mathcal{P}^\prime)+ U(\mathcal{P}^{\prime\prime}).
        \end{equation}
    \item[A6] \textit{Additivity}: If $\mathcal{P}^\prime$ and $\mathcal{P}^{\prime\prime}$ are independent, \eqref{add_eq} holds with equality.
\end{itemize}
In axiom A6, independence refers to a suitable notion for independence of credal sets, see e.g.\ \cite{couso}. An axiomatic definition of properties for uncertainty measures is a common approach in the literature \citep{pal1992uncertainty, pal1993uncertainty}. Examples of credal uncertainty measures that satisfy some of the axioms A1--A6 are the maximal entropy \citep{abellan2003maximum} and the generalized Hartley measure \citep{abellan2000non}.

Recall that the lower probability $\underline{P}$ of $\mathcal{P}$ is defined as $\underline{P}(A) \coloneqq \inf_{P \in \mathcal{P}} P(A)$, for all $A\in\sigma(\mathcal{Y})$, and call \textit{upper probability} its conjugate $\overline{P}(A) \coloneqq 1-\underline{P}(A^c)=\sup_{P \in \mathcal{P}} P(A)$, for all $A\in\sigma(\mathcal{Y})$. Since we are concerned with the fundamental question of whether the volume functional is a suitable measure for \textit{epistemic} uncertainty, we replace A4 with the following axiom that better suits our purposes. 

\begin{itemize}[leftmargin=2.4em]
    \item[A4'] \textit{Probability consistency}: $U(\mathcal{P})$ reduces to $0$ as the distance between $\overline{P}(A)$ and $\underline{P}(A)$ goes to $0$, for all $A\in\sigma(\mathcal{Y})$.
\end{itemize}
While A4' addresses solely the epistemic component of uncertainty assoicated with the credal set $\mathcal{P}$, A4 incorporates the aleatoric uncertainty. 
%\redc{Axiom A4 is more general than A4' 
%because it also takes AU into account.} 
Finally, we introduce a seventh axiom that subsumes a desirable property of $U$ proposed by \citet[Theorem 1.A3-A5]{hullermeier2022quantification}. 
\begin{itemize}[leftmargin=2.2em]
    \item[A7] \textit{Invariance}: $U$ is invariant to rotation and translation.
\end{itemize}

Call $d$ the cardinality of the label space $\mathcal{Y}$. In the next section, we will note that many of these axioms are satisfied by the volume operator in the case $d=2$ but can no longer be guaranteed for $d > 2$.

\section{Geometry of Epistemic Uncertainty}
\label{geometry of credal uncertainty}
As pointed out in Section \ref{credal uncertainty quantification}, there is no unambiguous measure of (credal) uncertainty for machine learning purposes. In this section, we present a measure for EU rooted in the geometric concept of volume and show how it is well-suited for a binary classification setting, while it loses its appeal when moving to a multi-class setting. 

Since we are considering a classification setting, we assume that $\mathcal{Y}$ is a finite Polish space so that $|\mathcal{Y} |=d$, for some natural number $d\geq 2$. We also let $\sigma(\mathcal{Y})=2^\mathcal{Y}$ to work with the finest possible $\sigma$-algebra of $\mathcal{Y}$; the results we provide still hold for any coarser $\sigma$-algebra.\footnote{Call $\tau$ the topology on $\mathcal{Y}$. The ideas expressed in this paper can be easily extended to the case where $\mathcal{Y}$ is not Polish. We require it to convey our results without dealing with topological subtleties.} Because $\mathcal{Y}$ is Polish, $\mathcal{M}(\mathcal{Y},\sigma (\mathcal{Y}))$ is Polish as well. In particular, the topology endowed to $\mathcal{M}(\mathcal{Y},\sigma (\mathcal{Y}))$ is the weak topology, which -- because we assumed $\mathcal{Y}$ to be finite -- coincides with the topology $\tau_{\|\cdot\|_2}$ induced by the Euclidean norm. Consider a credal set $\mathcal{P} \subset \mathcal{M}(\mathcal{Y},\sigma (\mathcal{Y}))$, which can be seen as the outcome of a procedure involving an imprecise Bayesian neural network (IBNN) \citep{caprio_IBNN}, or an imprecise neural network (INN) \citep{caprio_INN}; an ensemble-based approach is proposed by \cite{shaker2020aleatoric}.

Since $\mathcal{Y}=\{y_1,\ldots,y_d\}$, each element $P\in\mathcal{P}$ can be seen as a $d$-dimensional probability vector, $P=(p_1,\ldots,p_d)^\top$, where $p_j=P(\{y_j\})$, $j\in\{1,\ldots,d\}$, $p_j\geq 0$, for all $j\in\{1,\ldots,d\}$, and $\sum_{j=1}^d p_j=1$. This entails that if we denote by $\Delta^{d-1}$ the unit simplex in $\mathbb{R}^d$, we have $\mathcal{P}\subset\Delta^{d-1}$, which means that $\mathcal{P}$ is a convex body inscribed in $\Delta^{d-1}$.\footnote{In the remaining part of the paper, we denote by $\mathcal{P}$ both the credal set and its geometric representation, as no confusion arises.}

Intuitively, the ``larger'' $\mathcal{P}$ is, the higher the credal uncertainty. A natural way of capturing the size of $\mathcal{P}$, then, appears to be its volume $\text{Vol}(\mathcal{P})$. Notice that $\text{Vol}(\mathcal{P})$ is a bounded quantity: its value is bounded from below by $0$ and from above by $\sqrt{d}/[(d-1)!]$, the volume of the whole unit simplex $\Delta^{d-1}$. The latter corresponds to the case where $\mathcal{P}=\Delta^{d-1}$, that is, to the case of completely vacuous beliefs: the agent is only able to say that the probability of $A$ is in $[0,1]$, for all $A\in\mathcal{F}$. In this sense, the volume is a measure of the size of set $\mathcal{P}$ that increases the more uncertain the agent is about the elements of $\mathcal{F}$. This argument shows that $\text{Vol}(\mathcal{P})$ is well suited to capture credal uncertainty. But why is it appropriate to describe EU?\footnote{The concept of volume has been explored in the imprecise probabilities literature, see e.g., \cite{bloch}, \cite[Chapter 17]{cuzzo2}, and \cite{teddy_sei}, but, to the best of our knowledge, has never been tied to the notion of epistemic uncertainty. More in general, the geometry of imprecise probabilities has been studied, e.g., by \cite{anel,cuzzo2}.} Think of the extreme case where EU does not exist, so that the agent faces AU only. In that case, they would be able to specify a unique probability measure $P\in\mathcal{M}(\mathcal{Y},\sigma(\mathcal{Y}))$ (or equivalently, $P\in\Delta^{d-1}$), and $\text{Vol}(\{P\})=0$. Hence, if $\text{Vol}(\mathcal{P})>0$, then this means that the agent faces EU. In addition, let $(\mathcal{P}_n)_{n \in \mathbb{N}}$ be a sequence of credal sets on $(\mathcal{Y},\sigma(\mathcal{Y}))$ representing successive refinements of $\mathcal{P}$ computed as new data becomes available to the agent.\footnote{Clearly $|\mathcal{P}_n|=|\mathcal{P}|$, for all $n$.}  
%, and assume $\mathcal{P}_n$ to be connected for all $n$.
%Recall that $\underline{P}_n(A) \coloneqq \inf_{P \in \mathcal{P}_n} P(A)$ is the lower probability associated with $\mathcal{P}_n$, and call \textit{upper probability} its conjugate $\overline{P}_n(A) \coloneqq 1-\underline{P}(A^c)=\sup_{P \in \mathcal{P}_n} P(A)$, $A \in \mathcal{F}$. 
If, after observing enough evidence, the EU is resolved, that is, if $\lim_{n\rightarrow\infty}[\overline{P}_n(A)-\underline{P}_n(A)] = 0$ for all $A\in\mathcal{F}$, 
%\mathcal{P}_n \rightarrow \{P\} as $n\rightarrow\infty$ for some $P\in\mathcal{M}(\Omega,\mathcal{F})$ e.g. in the Hausdorff metric,\footnote{The Hausdorff metric is introduced in equation \eqref{hausd}.} 
we see that the following holds. Sequence $(\mathcal{P}_n )_{n \in \mathbb{N}}$ converges -- say in the Hausdorff metric -- as $n\rightarrow\infty$ to $\mathcal{P}^\star\subset\mathcal{M}(\mathcal{Y},\sigma(\mathcal{Y}))$ such that $|\mathcal{P}^\star|=|\mathcal{P}_n|$, for all $n$, and all the elements of $\mathcal{P}^\star$ are equal to $P^\star$, the (unique) probability measure that encapsulates the AU.\footnote{Technically $\mathcal{P}^\star$ is a multiset, that is, a set where multiple instances for each of its elements are allowed.} Through the learning process, we refine our estimates for the ``true'' underlying aleatoric uncertainty (pertaining to $P^\star$), which is left after all the EU is resolved. Then, the geometric representation of $\mathcal{P}^\star$ is a point whose volume is $0$. Hence, we have that the volume of $\mathcal{P}_n$ converges from above to $0$ (that is, it possesses the continuity property), which is exactly the behavior we would expect as EU resolves.

As we shall see, while this intuitive explanation holds if $d=2$, for $d>2$, continuity fails, thus making the volume not suited to capture EU in a multi-class classification setting. We also show in Theorem \ref{main_theorem} that the volume lacks robustness in higher dimensions. Small perturbations to the boundary of a credal set make its volume vary significantly. This may seriously hamper the results of a study, leading to potentially catastrophic consequences in downstream tasks.

\subsection{$\text{Vol}(\mathcal{P})$: a good measure for EU, but only if $d=2$}\label{continuity}
Let $d=2$ so that $\mathcal{P}$ is a subset of $\Delta^1$, the segment linking the points $(1,0)$ and $(0,1)$ in a $2$-dimensional Cartesian plane. Notice that in this case, the volume $\text{Vol}(\mathcal{P})$ corresponds to the length of the segment. In this context, $\text{Vol}(\mathcal{P})$ is an appealing measure to describe the EU associated with the credal set $\mathcal{P}$.
\begin{proposition}\label{prop-1}
    $\text{Vol}(\cdot)$ satisfies axioms A1--A3, A4', A5 and A7 of Section \ref{credal uncertainty quantification}.
\end{proposition}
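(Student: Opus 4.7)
The plan is to verify each axiom separately for $d=2$, exploiting the fact that in this case $\mathcal{P}$ is a closed convex subset of the segment $\Delta^1\subset\mathbb{R}^2$ joining $(1,0)$ and $(0,1)$. Thus $\mathcal{P}$ is a (possibly degenerate) closed interval, and $\text{Vol}(\mathcal{P})$ is simply its one-dimensional Lebesgue measure, i.e., its Euclidean length. Parametrizing $\Delta^1$ by $p_1=P(\{y_1\})\in[0,1]$, one has $\mathcal{P}=\{(p_1,1-p_1):p_1\in[\underline{p},\overline{p}]\}$ for some $0\le\underline{p}\le\overline{p}\le 1$, and
\begin{equation*}
\text{Vol}(\mathcal{P})=\sqrt{2}\,(\overline{p}-\underline{p}).
\end{equation*}
Axioms A1, A3, and A7 then follow immediately from elementary properties of Lebesgue measure: non-negativity and the bound $\text{Vol}(\mathcal{P})\le\text{Vol}(\Delta^1)=\sqrt{2}$ for A1; monotonicity of Lebesgue measure under inclusion for A3; and invariance of Lebesgue measure under the Euclidean isometry group for A7.

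For A4', taking $A=\{y_1\}$ one has $\overline{P}(A)-\underline{P}(A)=\overline{p}-\underline{p}$, so whenever $\overline{P}(A)-\underline{P}(A)\to 0$ for all $A\in\sigma(\mathcal{Y})$, the displayed formula gives $\text{Vol}(\mathcal{P})\to 0$. For A2, since $\text{Cr}(\mathcal{Y})$ is endowed with the Hausdorff metric induced by the Euclidean norm (the weak topology on $\mathcal{M}(\mathcal{Y},\sigma(\mathcal{Y}))$ coincides with $\tau_{\|\cdot\|_2}$ because $\mathcal{Y}$ is finite), Hausdorff convergence of a sequence of intervals $[\underline{p}_n,\overline{p}_n]$ to $[\underline{p},\overline{p}]$ is equivalent to convergence of both endpoints. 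Continuity of $\text{Vol}$ then reduces to continuity of the map $(\underline{p},\overline{p})\mapsto\sqrt{2}(\overline{p}-\underline{p})$. I would additionally cite the classical fact that the volume functional is continuous in the Hausdorff metric on the space of convex bodies, to which this is a special case.

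For A5, since $|\mathcal{Y}|=2$, any factorization $\mathcal{Y}=\mathcal{Y}_1\times\mathcal{Y}_2$ forces one factor to be a singleton. Without loss of generality $|\mathcal{Y}_2|=1$: then $\mathcal{P}''$ is a singleton $\{\delta\}$ (the unique probability measure on a one-point space), so $\text{Vol}(\mathcal{P}'')=0$, while the projection $\mathcal{P}\to\mathcal{P}'$ is an isometric bijection, giving $\text{Vol}(\mathcal{P})=\text{Vol}(\mathcal{P}')$. Inequality \eqref{add_eq} then holds trivially with equality.

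The main obstacle I anticipate is A2: one must make precise which topology is placed on $\text{Cr}(\mathcal{Y})$ and check that the volume functional is continuous in that topology. Once one invokes the Hausdorff metric (and uses the finiteness of $\mathcal{Y}$ to identify the weak topology on $\mathcal{M}(\mathcal{Y},\sigma(\mathcal{Y}))$ with the Euclidean topology), the argument collapses to continuity of a linear map on the endpoints, as above. All remaining items are routine.
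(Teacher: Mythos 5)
Your proof is correct, and for axioms A1, A2, A3, A4$'$ and A7 it follows essentially the same lines as the paper's (indeed your treatment of A2 and A4$'$ is somewhat more careful: the paper simply invokes ``continuity of the Lebesgue measure,'' whereas you correctly identify the topology on $\text{Cr}(\mathcal{Y})$ as the Hausdorff metric and reduce everything to the explicit formula $\text{Vol}(\mathcal{P})=\sqrt{2}\,(\overline{p}-\underline{p})$, which also makes A4$'$ an immediate computation rather than an appeal to a limiting multiset as in the paper). The genuine divergence is in A5. You read $\mathcal{Y}=\mathcal{Y}_1\times\mathcal{Y}_2$ as a literal Cartesian product, so that $|\mathcal{Y}|=2$ forces one factor to be a singleton, $\text{Vol}(\mathcal{P}'')=0$, and sub-additivity holds with equality via the isometry $\mathcal{P}\cong\mathcal{P}'$. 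The paper instead takes $\mathcal{Y}=\{(y_1,y_2),(y_3,y_4)\}$ with $\mathcal{Y}_1=\{y_1,y_3\}$ and $\mathcal{Y}_2=\{y_2,y_4\}$ both of cardinality two --- so that $\mathcal{Y}$ is really only the ``diagonal'' of $\mathcal{Y}_1\times\mathcal{Y}_2$, not the full product --- and shows that then each marginal satisfies $P'(y_1)=P((y_1,y_2))=P''(y_2)$ and $P'(y_3)=P((y_3,y_4))=P''(y_4)$, so both $\mathcal{P}'$ and $\mathcal{P}''$ are isometric copies of $\mathcal{P}$ and $\text{Vol}(\mathcal{P})\le 2\,\text{Vol}(\mathcal{P})=\text{Vol}(\mathcal{P}')+\text{Vol}(\mathcal{P}'')$. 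Under the literal reading of A5 your case analysis is exhaustive, so there is no gap; what the paper's version buys is coverage of the non-degenerate configuration it later reuses when discussing additivity (Proposition \ref{prop-2}), where your singleton-factor case reappears as the instance of strong independence with $y_1=y_3$. It would strengthen your write-up to note that your argument covers exactly the degenerate factorizations, and to add the paper's diagonal case if one wishes A5 to hold under that looser interpretation of the product structure.
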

Let us now discuss additivity (axiom A6 of Section \ref{credal uncertainty quantification}). Suppose the label space $\mathcal{Y}=\{(y_1,y_2),(y_3,y_4)\}$ can be written as $\mathcal{Y}_1 \times \mathcal{Y}_2$, where $\mathcal{Y}_1=\{y_1,y_3\}$ and $\mathcal{Y}_2=\{y_2,y_4\}$. Let $\mathcal{P}$ be a joint credal set on $\mathcal{Y}$ such that $\mathcal{P}^\prime$ is the marginal credal set on $\mathcal{Y}_1$ and $\mathcal{P}^{\prime\prime}$ is the marginal credal set on $\mathcal{Y}_2$. In the proof of Proposition \ref{prop-1}, we show that if $y_1\neq y_3$ and $y_2\neq y_4$,\footnote{This implies that $|\mathcal{Y}|=|\mathcal{Y}_1|=|\mathcal{Y}_2|=2$.} then the volume is sub-additive. Suppose instead now that $y_1=y_3=y_\star$, so that $|\mathcal{Y}|=|\mathcal{Y}_2|=2$ and $|\mathcal{Y}_1|=1$.\footnote{A similar argument will hold if we assume $y_2=y_4=y^\star$, so that $|\mathcal{Y}|=|\mathcal{Y}_1|=2$ and $|\mathcal{Y}_2|=1$.}  Then, the marginal $\text{marg}_{\mathcal{Y}_1}(P)=P^\prime$ of any $P\in\mathcal{P}$ on $\mathcal{Y}_1$ will give probability $1$ to $y_\star$; in formulas, $P^\prime(y_\star)=1$. This entails that $\mathcal{P}^\prime=\{P^\prime\}$ is a singleton and that its geometric representation is a point.\footnote{Or, alternatively, $\mathcal{P}^\prime$ is a multiset whose elements are all equal.} Then, for all $P\in\mathcal{P}$, $P((y_1,y_2))=P^{\prime\prime}(y_2)$ and $P((y_3,y_4))=P^{\prime\prime}(y_4)$, where $\text{marg}_{\mathcal{Y}_2}(P)=P^{\prime\prime}$ is the marginal of any $P \in\mathcal{P}$ on $\mathcal{Y}_2$. 

In turn, this line of reasoning implies that $ \text{Vol}(\mathcal{P}^\prime) + \text{Vol}(\mathcal{P}^{\prime\prime}) = 0 + \text{Vol}(\mathcal{P}) = \text{Vol}(\mathcal{P})$, which shows that the volume is additive in this case. 

This situation corresponds to an instance of strong independence (SI) \citep[Section 3.5]{couso}. We have SI if and only if
\begin{align}\label{strong}
    \begin{split}
        \mathcal{P}=\text{Conv}(\{P \in \mathcal{M}(\mathcal{Y},\sigma(\mathcal{Y})) \text{: } &\text{marg}_{\mathcal{Y}_1}(P) \in \mathcal{P}^\prime\\
        \text{and } &\text{marg}_{\mathcal{Y}_2}(P) \in \mathcal{P}^{\prime\prime}\}).
    \end{split}
\end{align}
%where $\text{marg}_{\mathcal{Y}_1}(P)$ denotes the marginal of $P$ on $\mathcal{Y}_1$, and similarly for $\text{marg}_{\mathcal{Y}_2}(P)$. 
In other words, there is complete lack of interaction between the probability measure on $\mathcal{Y}_1$ and those on $\mathcal{Y}_2$. To see that this is the case, recall that $\mathcal{P}$ is a credal set, and so is convex; recall also that $\mathcal{P}^\prime=\{P^\prime\}$ is a singleton. Then, pick any $P^{ex}\in\text{ex}(\mathcal{P})$, where $\text{ex}(\mathcal{P})$ denotes the set of extreme elements of $\mathcal{P}$. We have that $\text{marg}_{\mathcal{Y}_1}(P^{ex})=P^\prime$, and so $\text{marg}_{\mathcal{Y}_2}(P^{ex})\in \text{ex}(\mathcal{P}^{\prime\prime})$. With a slight abuse of notation, we can write $\text{ex}(\mathcal{P})=\{P^\prime\} \times \text{ex}(\mathcal{P}^{\prime\prime})$.  This immediately implies that the equality in (\ref{strong}) holds. As pointed out in \cite[Section 3.5]{couso}, SI implies independence of the marginal sets, epistemic independence of the marginal experiments, and independence in the selection \citep[Sections 3.1, 3.4, and 3.5, respectively]{couso}. It is, therefore, a rather strong notion of independence.

The volume is also trivially additive if $(y_1,y_2)=(y_3,y_4)$, but in that case $\mathcal{Y}$ would be a multiset. 

The argument put forward so far can be summarized in the following proposition.
\begin{proposition}\label{prop-2}
    Let $\mathcal{Y}=\{(y_1,y_2),(y_3,y_4)\}$. $\text{Vol}(\cdot)$ satisfies axiom A6 if we assume the instance of SI given by either of the following
    \begin{itemize}
        \item $y_1=y_3$,
        \item $y_2=y_4$,
        \item $y_1=y_3$ and $y_2=y_4$.
    \end{itemize}
\end{proposition}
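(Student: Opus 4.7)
The plan is to verify the additivity claim case by case, exploiting the fact that in each scenario at least one of the marginal sets is a singleton and therefore has volume zero. The discussion immediately preceding the proposition already covers the case $y_1=y_3$, so the proof amounts to formalizing that argument and observing that the remaining two cases reduce to it by symmetry or triviality.

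For the first case ($y_1=y_3=y_\star$), I would start by noting that $|\mathcal{Y}_1|=1$, so every probability measure on $\mathcal{Y}_1$ must assign mass $1$ to $y_\star$. Consequently $\mathcal{P}^\prime=\{P^\prime\}$ is a singleton whose geometric representation is a single point in $\Delta^{0}$, giving $\text{Vol}(\mathcal{P}^\prime)=0$. I would then exhibit the marginalization map $\Phi:\mathcal{P}\to\mathcal{P}^{\prime\prime}$, $\Phi(P)=\text{marg}_{\mathcal{Y}_2}(P)$. Since every $P\in\mathcal{P}$ satisfies $P((y_\star,y_2))=P^{\prime\prime}(y_2)$ and $P((y_\star,y_4))=P^{\prime\prime}(y_4)$, the map $\Phi$ is a bijection whose action on coordinates is merely a relabeling. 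Thus the geometric representation of $\mathcal{P}$ inside $\Delta^{|\mathcal{Y}|-1}$ is congruent (as a subset of Euclidean space) to the geometric representation of $\mathcal{P}^{\prime\prime}$ inside $\Delta^{|\mathcal{Y}_2|-1}$, so $\text{Vol}(\mathcal{P})=\text{Vol}(\mathcal{P}^{\prime\prime})$. Combining, $\text{Vol}(\mathcal{P}^\prime)+\text{Vol}(\mathcal{P}^{\prime\prime})=0+\text{Vol}(\mathcal{P}^{\prime\prime})=\text{Vol}(\mathcal{P})$, so \eqref{add_eq} holds with equality.

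For the second case ($y_2=y_4$), the argument is entirely symmetric: swap the roles of $\mathcal{Y}_1$ and $\mathcal{Y}_2$, conclude that $\mathcal{P}^{\prime\prime}=\{P^{\prime\prime}\}$ is a singleton with $\text{Vol}(\mathcal{P}^{\prime\prime})=0$, and deduce $\text{Vol}(\mathcal{P})=\text{Vol}(\mathcal{P}^\prime)$ via the analogous bijection $P\mapsto\text{marg}_{\mathcal{Y}_1}(P)$. For the third case, where $y_1=y_3$ and $y_2=y_4$ both hold, the label space $\mathcal{Y}$ (as a set) consists only of the element $(y_1,y_2)$, so $\mathcal{P}$, $\mathcal{P}^\prime$, and $\mathcal{P}^{\prime\prime}$ are all singletons with zero volume, and \eqref{add_eq} is trivially an equality.

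The main subtlety I anticipate is in justifying the volume-preserving identification between $\mathcal{P}$ and $\mathcal{P}^{\prime\prime}$ in the first case. Formally, these sets are embedded in different ambient Euclidean spaces ($\Delta^{|\mathcal{Y}|-1}$ versus $\Delta^{|\mathcal{Y}_2|-1}$), and to legitimately equate their volumes one must argue that the map $\Phi$ is a Euclidean isometry between the affine hulls, not merely an affine bijection. This is not hard: the coordinate-relabeling nature of $\Phi$ means it preserves pairwise Euclidean distances between probability vectors, which certifies that the Hausdorff/Lebesgue measure on $\text{aff}(\mathcal{P})$ is transported exactly to that on $\text{aff}(\mathcal{P}^{\prime\prime})$. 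Once this observation is recorded, the rest of the proof is bookkeeping.
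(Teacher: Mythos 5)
Your proof is correct and follows essentially the same route as the paper, which establishes the $y_1=y_3$ case in the discussion preceding the proposition (via $\mathcal{P}'$ being a singleton of zero volume and the coordinate identification $P((y_\star,y_j))=P''(y_j)$) and dismisses the remaining cases by symmetry and triviality. Your explicit verification that the marginalization map is a Euclidean isometry between the ambient simplices is a welcome tightening of the step $\text{Vol}(\mathcal{P})=\text{Vol}(\mathcal{P}'')$, which the paper asserts without comment.
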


If $d>2$, the volume ceases to be an appealing measure for EU. This is because quantifying the uncertainty associated with a credal set becomes challenging due to the dependency of the volume on the dimension.
% \redc{To see why, notice that the volume depends on the dimension of the space we consider.} {\color{blue} MC: maybe ``This because the volume depends on the dimension of the space we consider.''} 
So far, we have written $\text{Vol}$ in place of $\text{Vol}_{d-1}$ to ease notation, but for $d>2$ the dimension with respect to which the volume is taken becomes crucial. Let us give a simple example to illustrate this.

\begin{example}\label{failure}
    Let $d=3$, so that the unit simplex is $\Delta^{3-1}=\Delta^2$, the triangle whose extreme points are $(1,0,0)$, $(0,1,0)$, and $(0,0,1)$ in a $3$-dimensional Cartesian plane (the purple triangle in Figure \ref{fig4}). Consider a sequence $(\mathcal{P}_n)$ of credal sets whose geometric representations are triangles, and suppose their height reduces to $0$ as $n\rightarrow\infty$, so that the (geometric representation of) $\mathcal{P}_\infty$ -- the limit of $(\mathcal{P}_n)$ in the Hausdorff metric -- is a segment. The limiting set $\mathcal{P}_\infty$, then, is not of full dimensionality that is, its geometric representation is a proper subset of $\Delta^1$, while the geometric representation of $\mathcal{P}_n$ is a proper subset of $\Delta^2$, for all $n$. This implies that $\text{Vol}_2(\mathcal{P}_\infty)=0$, but -- unless $\mathcal{P}_\infty$ is a degenerate segment, i.e. a point -- $\text{Vol}_1(\mathcal{P}_\infty)>0$. As we can see, the EU has not resolved, yet $\mathcal{P}_\infty$ has a zero $2$-dimensional volume; this is clearly undesirable. It is easy to see how this problem exacerbates in higher dimensions.
    %\demo
\end{example}

There are two possible ways one could try to circumvent the issue in Example \ref{failure}; alas, both exhibit shortcomings, that is, at least one of the axioms A1--A3, A4', A5--A7 in Section \ref{credal uncertainty quantification} is not satisfied.
%for both of them axiom (A3) fails to hold. 
The first one is to consider the volume operator $\text{Vol}(\mathcal{P})$ as the volume taken with respect to the space in which set $\mathcal{P}$ is of full dimensionality. In this case, we immediately see how A2 fails. Considering again the sequence in Example \ref{failure}, we would have a sequence $(\mathcal{P}_n)$ whose volume $\text{Vol}_2(\mathcal{P}_n)$ is going to zero. However, in the limit, its volume $\text{Vol}_1(\mathcal{P}_\infty)$ would be positive. Axiom A3 fails as well: consider a credal set $\mathcal{P}$ whose representation is a triangle having base $b$ and height $h$ and suppose $h<2$. Consider then a credal set $\mathcal{Q} \subsetneq \mathcal{P}$ whose representation is a segment having length $\ell=b$. Then, $\text{Vol}_2(\mathcal{P})=b\cdot h/2 < b$, while $\text{Vol}_1(\mathcal{Q})=\ell=b$.

The second one is to consider lift probability sets; let us discuss this idea in depth. Let  $d,d^\prime\in\mathbb{N}$, and let $d^\prime < d$. Call 
$$O(d^\prime,d)\coloneqq\{V \in \mathbb{R}^{d^\prime\times d} : VV^\top=I_{d^\prime}\},$$
where $I_{d^\prime}$ is the $d^\prime$-dimensional identity matrix. That is, $O(d^\prime,d)$ is the \textit{Stiefel manifold} of $d^\prime\times d$ matrices with orthonormal rows \citep{lim2}. Then, for any $V\in O(d^\prime,d)$ and any $b\in \mathbb{R}^{d^\prime}$, define
$$\varphi_{V,b}: \mathbb{R}^d \rightarrow \mathbb{R}^{d^\prime}, \quad x \mapsto \varphi_{V,b}(x)\coloneqq Vx + b.$$
Suppose now that, for some $n$, (the geometric representation of) $\mathcal{P}_n$ is a proper subset of $\Delta^{d-1}$, while (the geometric representation of) $\mathcal{P}_{n+1}$ is a proper subset of $\Delta^{d^{\prime}-1}$. Pick any $V\in O(d^\prime,d)$ and any $b\in\mathbb{R}^{d^\prime}$; an embedding of $\mathcal{P}_{n+1}$ in $\Delta^{d-1}$ is a set $K$ such that for all $x\in K$, there exists a probability vector $p\in\mathcal{P}_{n+1}$ such that $\varphi_{V,b}(x)=p$. 
%\begin{itemize}
    %\item compact (this assumption is needed because we need compactness for our results in the paper);
    %\item for all $x\in K$, there exists probability vector $p\in\mathcal{P}_{n+1}$ such that $\varphi_{V,b}(x)=p$.
%\end{itemize}
%closed and 
Call $\Phi^+(\mathcal{P}_{n+1},d)$ the set of embeddings of $\mathcal{P}_{n+1}$ in $\Delta^{d-1}$, and assume that it is nonempty. 
%\footnote{This should be easily %verifiable.}

%{\color{blue} Maybe we need to require $\Phi^+(\mathcal{P}_{n+1},d)$ to be closed as well; if so, it has to be closed with respect to the topology induced by the Hausdorff metric in equation (3) in the write-up. I know that compact sets in the Hausdorff space are closed, but $\Phi^+(\mathcal{P}_{n+1},d)$ is a set of sets, so maybe we should require it?}
Then, define
$$\breve{\mathcal{P}}_{n+1}\coloneqq\argmin_{K\in \Phi^+(\mathcal{P}_{n+1},d)} \left| \text{Vol}_{d-1} (K) - \text{Vol}_{d^\prime-1} (\mathcal{P}_{n+1}) \right|;$$
we call it the \textit{lift probability set} for the heuristic similarity with lift zonoids \citep{mosler}. We define it in this way because we want the $d$-dimensional set whose (full dimensionality) volume is the closest possible to the ($d^\prime$-dimensional) volume of $\mathcal{P}_{n+1}$. A simple example is the following. Suppose the geometric representation of $\mathcal{P}_n$ is a proper subset of $\Delta^2$, and that the geometric representation of $\mathcal{P}_{n+1}$ is a proper subset of $\Delta^1$. So the former is a subset of $\mathbb{R}^2$, and the latter is a segment in $\mathbb{R}$. Then, a possible $\breve{\mathcal{P}}_{n+1}$ is any triangle in $\Delta^2$ whose height $h$ is $2$ and whose base length $b$ is equal to the length $\ell$ of the segment representing $\mathcal{P}_{n+1}$. This because the area of such $\breve{\mathcal{P}}_{n+1}$ is $b \cdot h/2$; if $h=2$ and $b=\ell$, then $\text{Vol}_2(\breve{\mathcal{P}}_{n+1})=\text{Vol}_1(\mathcal{P}_{n+1})$, which is what we wanted. A visual representation is given in Figure \ref{fig4}.

\begin{figure}[h!]
\centering
\includegraphics[width=.39\textwidth]{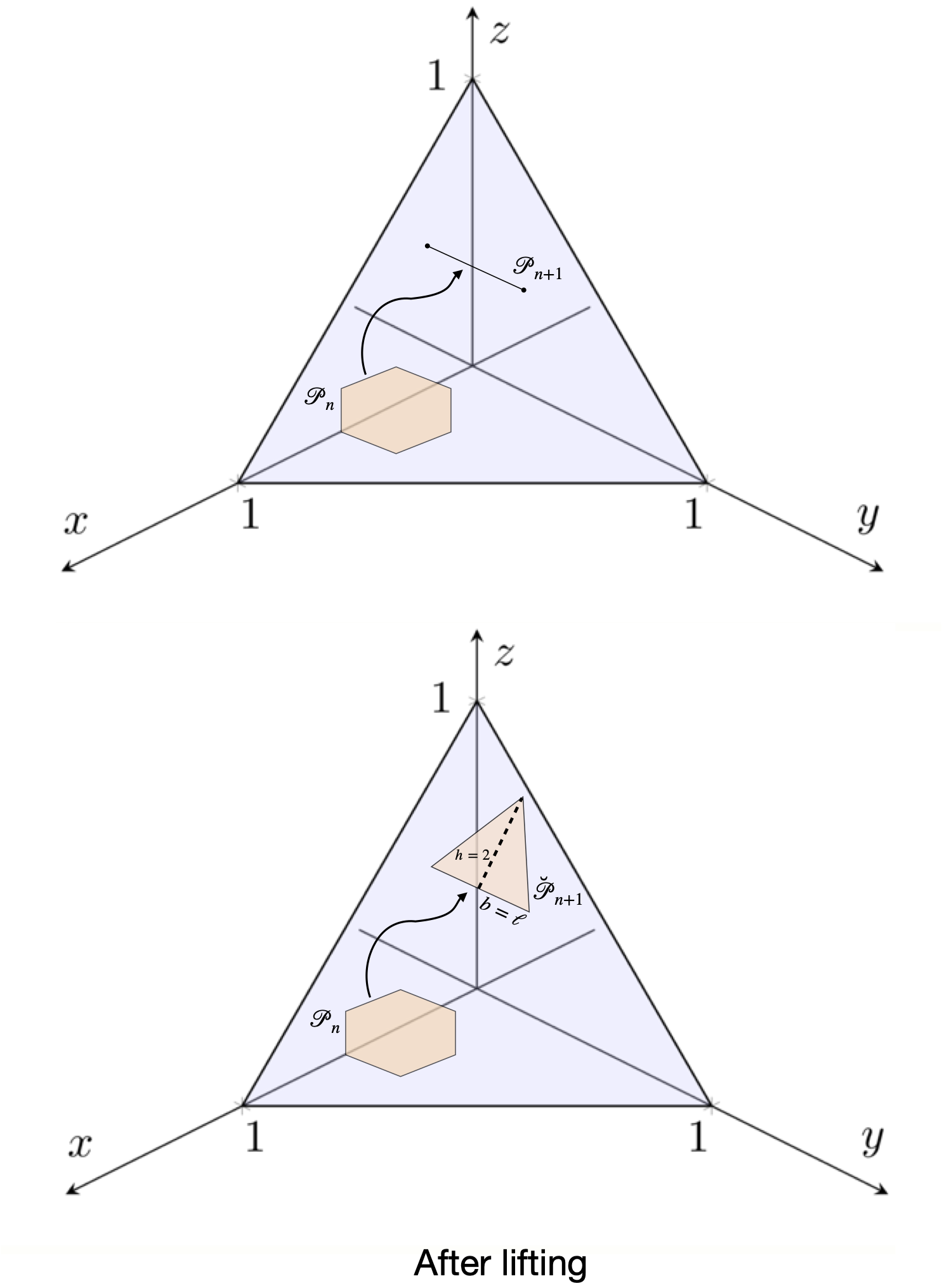}
\caption{A visual representation of a lift probability set.}
\label{fig4}
\centering
\end{figure}

Notice that $\breve{\mathcal{P}}_{n+1}$ is well defined because $\Phi^+(\mathcal{P}_{n+1},d) \subset 2^{\Delta^{d-1}}$, and $\Delta^{d-1}$ is compact.\footnote{If the $\argmin$ is not a singleton, pick any of its elements.} We can then compare $\text{Vol}_{d-1}(\mathcal{P}_n)$ and of $\text{Vol}_{d-1}(\breve{\mathcal{P}}_{n+1})$, and also compute the relative quantity
$$\frac{\left|\text{Vol}_{d-1}(\mathcal{P}_n) - \text{Vol}_{d-1}(\breve{\mathcal{P}}_{n+1})\right|}{\text{Vol}_{d-1}(\mathcal{P}_n)}$$
that captures the variation in volume between $\mathcal{P}_n$ and $\breve{\mathcal{P}}_{n+1}$. Alas, in this case, too, it is easy to see how A2 fails. Consider the same sequence as in Example \ref{failure}. We would have that $\text{Vol}_2(\mathcal{P}_n)$ goes to zero as $n\rightarrow\infty$, but $\text{Vol}_2(\breve{\mathcal{P}}_\infty)>0$. Axiom A3 may fail as well since we could find credal sets $\mathcal{P} \subset \Delta^{d-1}$ and $\mathcal{Q} \subset \Delta^{d^\prime-1}$ such that $\mathcal{Q} \subsetneq \mathcal{P}$, but $\breve{\mathcal{Q}} \not\subset \mathcal{P}$.

\subsection{Lack of robustness in higher dimensions}\label{instab}
In this section, we show how, if we measure the EU associated with a credal set on the label space using the volume, as the number of labels grows, ``small'' changes of the uncertainty representation may lead to catastrophic consequences in downstream tasks. 

 For a generic compact set $K\in\mathbb{R}^d$ and a positive real $r$, the \textit{$r$-packing} of $K$, denoted by $\text{Pack}_r(K)$, is the collection of sets $K^\prime$ that satisfy the following properties
\begin{itemize}
    \item[(i)] $K^\prime \subset K$,
    \item[(ii)] $\cup_{x\in K^\prime}B_r^d(x) \subset K$, where $B_r^d(x)$ denotes the ball of radius $r$ in space $\mathbb{R}^d$ centered at $x$,
    \item[(iii)] the elements of $\{B_r^d(x)\}_{x\in K^\prime}$ are pairwise disjoint,
    \item[(iv)] there does not exist $x^\prime \in K$ such that (i)-(iii) are satisfied by $K^\prime\cup\{x^\prime\}$.
\end{itemize}
The \textit{packing number} of $K$, denoted by $N^{\text{pack}}_r(K)$, is given by $\max_{K^\prime\in{\text{Pack}_r(K)}}|K^\prime|$. We also let $K^\star_r\coloneqq\argmax_{K^\prime\in{\text{Pack}_r(K)}}|K^\prime|$ and $\tilde{K}_r\coloneqq\cup_{x\in K^\star_r} B^d_r(x)$. Notice that 
\begin{equation}\label{volume_eq}
    \text{Vol}(\tilde{K}_r)=c(r,d,K)\text{Vol}(K),
\end{equation}
where 
\begin{equation}\label{decr_c}
\begin{split}
   c(r,d,K) &\in (0,1] \text{, for all } r>0,\\
   \text{and} \quad c(r,d,K) &\leq c(r-\epsilon,d,\check{K}) \text{, for all } \epsilon>0,
\end{split}
\end{equation}
where $\check{K}$ is any compact set in $\mathbb{R}^d$, possibly different than $K$. That is, we can always find a real number $c(r,d,K)$ depending on the dimension $d$ of the Euclidean space, on the radius $r$ of the balls, and on the set $K$ of interest, that relates the volume of $K$ and that of $\tilde{K}_r$. Being in $(0,1]$, it takes into account the fact that since $\tilde{K}_r$ is a union of pairwise disjoint balls \textit{within} $K$, its volume cannot exceed that of $K$. This is easy to see in Figure \ref{fig2}. The second condition in \eqref{decr_c} states that irrespective of the compact set of interest, we retain more of the volume of the original set if we pack it using balls of a smaller radius. 

To give a simple illustration, consider $r_1,r_2>0$ such that $r_1 \leq r_2$. Then, by \eqref{volume_eq} and \eqref{decr_c}, we have that $\text{Vol}({K})-\text{Vol}(\tilde{K}_{r_2}) =  \text{Vol}({K})[1-c(r_2,d,K)] \geq \text{Vol}({K})[1-c(r_1,d,K)] = \text{Vol}({K}) - \text{Vol}(\tilde{K}_{r_1})$. This means that the difference in volume between $K$ and $\tilde{K}_{r_2}$ is larger than that between $K$ and $\tilde{K}_{r_1}$.

Let $\mathcal{K}(\mathbb{R}^d)$ be the class of compact sets in $\mathbb{R}^d$, and call $c(r,d)\coloneqq\max_{K\in\mathcal{K}(\mathbb{R}^d)} c(r,d,K)$. As $r$ goes to $0$, $c(r,d)$ increases to its optimal value that we denote as $c^\star(d)$. The values of $c^\star(d)$ have only been found for $d\in\{1,2,3,8,24\}$ \citep{cohn,maryna}. The fact that $c(r,d)$ increases as $r$ decreases to $0$ captures the idea that using balls of smaller radius leads to a better approximation of the volume of the compact set $K$ in $\mathbb{R}^d$ that is being packed.

\begin{figure}[h!]
\centering
\includegraphics[width=.3\textwidth]{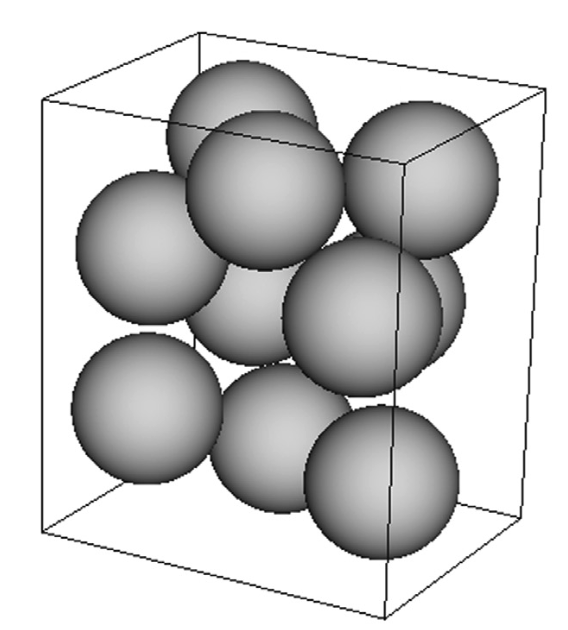}
\caption{A representation of $\tilde{K}_r$, for some $r>0$, where $K$ is a parallelepiped in $\mathbb{R}^3$. This figure replicates \cite[Figure 4]{hifi}.}
\label{fig2}
\centering
\end{figure}

Suppose our credal set $\mathcal{P}$ is compact, so to be able to use the concepts of $r$-packing and packing number. Consider then a set $\mathcal{Q} \subset \mathcal{M}(\Omega,\mathcal{F})$ that satisfies the following three properties:
\begin{enumerate}
    \item[(a)] $\mathcal{Q}\subsetneq \mathcal{P}$, so that $\mathcal{Q}^\prime\coloneqq\mathcal{P}\setminus\mathcal{Q} \neq \emptyset$,
    \item[(b)] $d_H(\mathcal{P},\mathcal{Q})= \epsilon$, for some $\epsilon>0$,
    \item[(c)] $\epsilon$ is such that we can find $r>0$ for which $N^{\text{pack}}_r(\mathcal{P}) \geq N^{\text{pack}}_{r-\epsilon}(\mathcal{Q}^\prime)$.
    %\footnote{In light of property (c), $\epsilon$ must be in $(0,r)$.}
\end{enumerate}
Property (a) tells us that $\mathcal{Q}$ is a proper subset of $\mathcal{P}$. Let $d_2$ denote the metric induced by the Euclidean norm $\|\cdot \|_2$. Property (b) tells us that the Hausdorff distance 
\begin{equation}\label{hausd}
    d_H(\mathcal{P},\mathcal{Q})=\max\left\lbrace{ \max_{P\in\mathcal{P}}d_2(P,\mathcal{Q}) ,  \max_{Q\in\mathcal{Q}}d_2(\mathcal{P},Q)}\right\rbrace
\end{equation}
between $\mathcal{P}$ and $\mathcal{Q}$ is equal to some $\epsilon>0$. Property (c) ensures that $\epsilon$ is ``not too large''. To understand why, notice that if $\epsilon$ is ``large'', that is, if it is close to $r$, then the packing number of $\mathcal{Q}^\prime \subsetneq \mathcal{P}$ using balls of radius $r-\epsilon$ can be larger than the packing number of $\mathcal{P}$ using balls of radius $r$.\footnote{Because $\mathcal{Q}\subsetneq \mathcal{P}$ and $d_H(\mathcal{P},\mathcal{Q})= \epsilon$, packing using balls of radius $r-\epsilon$ is a sensible choice.} Requiring (c) ensures us that this does not happen, and therefore that $\epsilon$ is ``small''. A representation of $\mathcal{P}$ and $\mathcal{Q}$ satisfying (a)--(c) is given in Figure \ref{fig1}. A (possibly very small) change in uncertainty representation is captured by a situation in which the agent specifies credal set $\mathcal{Q}$ in place of $\mathcal{P}$. We are ready to state the main result of this section.

\begin{figure}[h!]
\centering
\includegraphics[width=.4\textwidth]{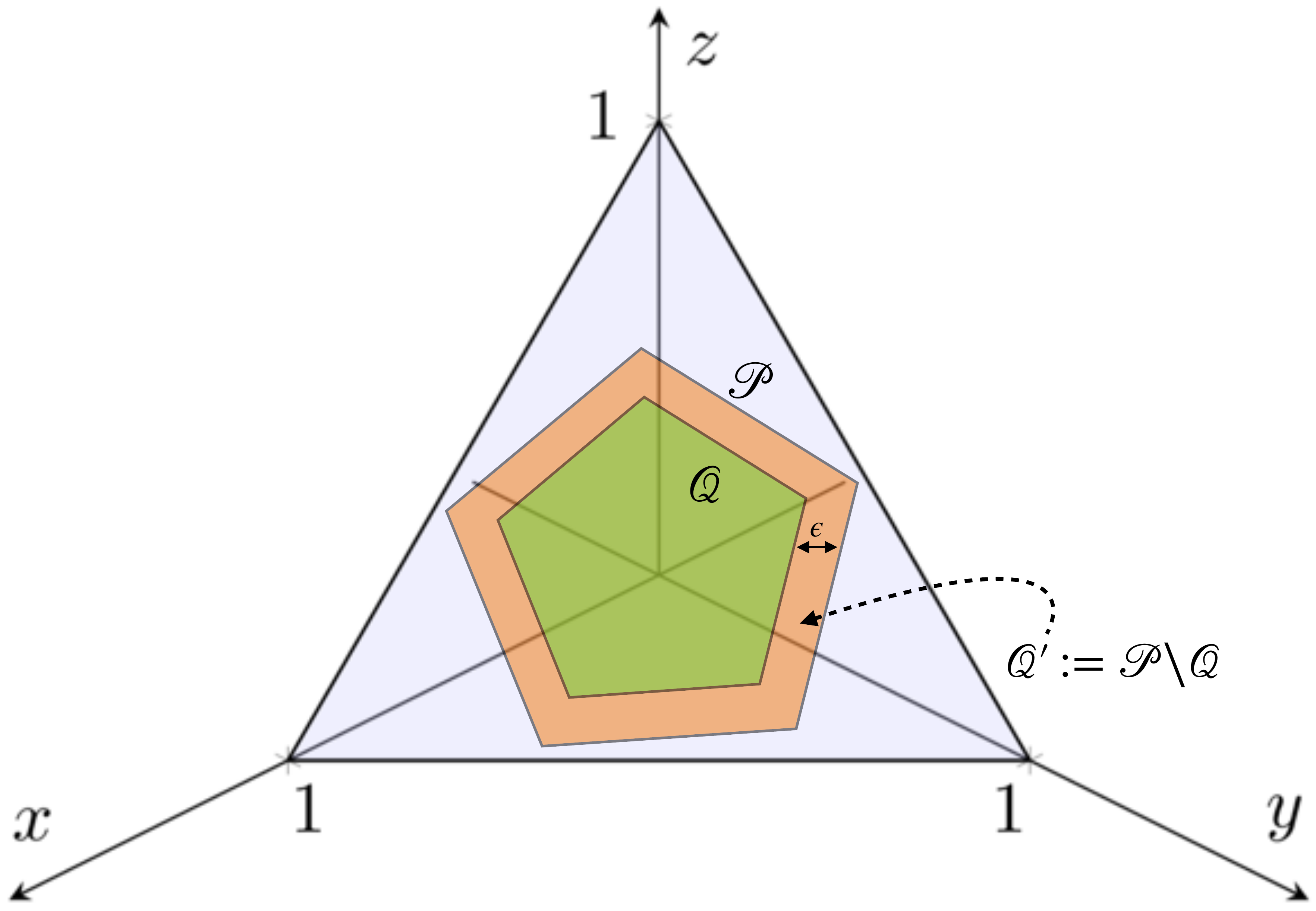}
\caption{A representation of $\mathcal{P}$ (the orange pentagon) and $\mathcal{Q}$ (the green pentagon) satisfying (a)-(c) when the dimension of state space $\Omega$ is $d=3$. The unit simplex $\Delta^2$ in $\mathbb{R}^3$ is given by the purple triangle whose vertices are the elements of the basis of $\mathbb{R}^3$, i.e., $e_1=(1,0,0)$, $e_2=(0,1,0)$, and $e_2=(0,0,1)$.}
\label{fig1}
\centering
\end{figure}

\begin{theorem}\label{main_theorem}
   Let $\Omega$ be a finite Polish space so that $|\Omega|=d$, and let $\mathcal{F}=2^\Omega$. Pick any compact set $\mathcal{P}\subset\mathcal{M}(\Omega,\mathcal{F})$, and any set $\mathcal{Q}$ that satisfies (a)-(c). The following holds
   \begin{equation}\label{main_eq}
       \frac{\text{Vol}(\mathcal{P})-\text{Vol}(\mathcal{Q}^\prime)}{\text{Vol}(\mathcal{P})} \geq 1- \left( 1-\frac{\epsilon}{r} \right)^d.
   \end{equation}
\end{theorem}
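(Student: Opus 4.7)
The plan is to convert the packing hypothesis (c) into a direct volume comparison via the identity (\ref{volume_eq}), and then exploit the elementary fact that the ratio of $d$-dimensional ball volumes $V_d(r-\epsilon)/V_d(r)$ equals $(1-\epsilon/r)^d$; the monotonicity in (\ref{decr_c}) will absorb the residual constants. It is convenient to target the equivalent form
\[
\text{Vol}(\mathcal{Q}^\prime)\;\leq\;\bigl(1-\tfrac{\epsilon}{r}\bigr)^d\,\text{Vol}(\mathcal{P}),
\]
since dividing by $\text{Vol}(\mathcal{P})$ and subtracting from $1$ immediately returns (\ref{main_eq}).

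First I would re-express the packing numbers volumetrically. Because the balls in any $r$-packing are pairwise disjoint by (iii) and contained in $K$ by (ii), we have $\text{Vol}(\tilde K_r)=N^{\text{pack}}_r(K)\cdot V_d(r)$, where $V_d(r)$ denotes the Euclidean volume of a $d$-ball of radius $r$. Combining with (\ref{volume_eq}) yields the identity
\[
N^{\text{pack}}_r(K)\;=\;\frac{c(r,d,K)\,\text{Vol}(K)}{V_d(r)}.
\]
Applying this to $(\mathcal{P},r)$ and to $(\mathcal{Q}^\prime,r-\epsilon)$, the hypothesis (c) becomes
\[
\frac{c(r,d,\mathcal{P})\,\text{Vol}(\mathcal{P})}{V_d(r)}\;\geq\;\frac{c(r-\epsilon,d,\mathcal{Q}^\prime)\,\text{Vol}(\mathcal{Q}^\prime)}{V_d(r-\epsilon)}.
\]

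Second, I would extract the geometric factor. Since $V_d(\rho)$ scales as $\rho^d$, one has $V_d(r-\epsilon)/V_d(r)=(1-\epsilon/r)^d$, and the previous display rearranges to
\[
\text{Vol}(\mathcal{Q}^\prime)\;\leq\;\bigl(1-\tfrac{\epsilon}{r}\bigr)^d\cdot\frac{c(r,d,\mathcal{P})}{c(r-\epsilon,d,\mathcal{Q}^\prime)}\cdot\text{Vol}(\mathcal{P}).
\]
The final step is to discard the middle factor: the second line of (\ref{decr_c}), applied with $K=\mathcal{P}$ and $\check K=\mathcal{Q}^\prime$, gives $c(r,d,\mathcal{P})\leq c(r-\epsilon,d,\mathcal{Q}^\prime)$, so the ratio is bounded by $1$, and the target inequality drops out.

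The step I expect to require the most care is the invocation of (\ref{decr_c}) with the substitution $\check K=\mathcal{Q}^\prime$: this is a statement about $c$ evaluated on distinct compact sets, and one must be explicit that the paper's universal quantification over $\check K$ authorizes this choice. A secondary bookkeeping point is the degenerate case where $\mathcal{P}$ fails to be of full dimensionality, so that $\text{Vol}(\mathcal{P})=0$; there (\ref{main_eq}) is vacuous and trivially holds. Apart from these subtleties, the argument is a short chain linking (c), the packing–volume identity (\ref{volume_eq}), the $d$-th power scaling of ball volumes, and the monotonicity in (\ref{decr_c}).
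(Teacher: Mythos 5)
Your proof is correct and uses exactly the same ingredients as the paper's own argument --- the packing--volume identity \eqref{volume_eq}, the $\rho^d$ scaling of ball volumes, hypothesis (c), and the monotonicity in \eqref{decr_c} applied across the two distinct sets $\mathcal{P}$ and $\mathcal{Q}^\prime$ --- merely rearranged to target $\text{Vol}(\mathcal{Q}^\prime)\leq\left(1-\epsilon/r\right)^d\text{Vol}(\mathcal{P})$ instead of manipulating the ratio directly. The paper's chain invokes \eqref{decr_c} first and (c) last, whereas you reverse the order; this is a cosmetic difference only, and your added remarks on the legitimacy of the substitution $\check K=\mathcal{Q}^\prime$ and on the degenerate case $\text{Vol}(\mathcal{P})=0$ are sensible bookkeeping.
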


Notice that we implicitly assumed that at least a $\mathcal{Q}$ satisfying (a)-(c) exists. We have that $[\text{Vol}(\mathcal{P})-\text{Vol}(\mathcal{Q}^\prime)]/\text{Vol}(\mathcal{P}) \in [0,1]$; in light of this, since $1-(1-\epsilon/r)^d \rightarrow 1$ as $d\rightarrow \infty$, Theorem \ref{main_theorem} states that as $d$ grows, most of the volume of $\mathcal{P}$ concentrates near its boundary. 

As a result, if we use the volume operator as a metric for the EU, this latter is very sensitive to perturbations of the boundary of the (geometric representation of the) credal set; this is problematic for credal sets in the context of ML. Suppose we are in a multi-classification setting such that the cardinality of $\mathcal{Y}$ is some large number $d$. Suppose that two different procedures produce two different credal sets on $\mathcal{Y}$; call one $\mathcal{P}$ and the other $\mathcal{Q}$, and suppose $\mathcal{Q}$ satisfies (a)-(c). This means that the uncertainty representations associated with the two procedures differ only by a ``small amount''. For instance, this could be the result of an agent specifying ``slightly different'' credal prior sets. This may well happen since defining the boundaries of credal sets is usually quite an arbitrary task to perform. Then, this would result in a (possibly massive) underestimation of the epistemic uncertainty in the results of the analysis, which would potentially translate in catastrophic consequence in downstream tasks. In Example \ref{example-2}, we describe a situation in which Theorem \ref{main_theorem} is applied to credal prior sets.

\begin{example}\label{example-2}
 Assume for simplicity that the parameter space $\Theta$ is finite and that its cardinality is $\mathfrak{c}$. Suppose an agent faces complete ignorance regarding the probabilities to assign to the elements of $2^\Theta$. Although tempting, there is a pitfall in choosing the whole simplex $\Delta^{\mathfrak{c}-1}$ as the credal prior set. As shown by \citet[Chapter 5]{walley}, completely vacuous beliefs -- captured by choice of $\Delta^{\mathfrak{c}-1}$ as a credal prior set -- cannot be Bayes-updated. This means that the posterior credal set will again be $\Delta^{\mathfrak{c}-1}$: no large amount of data is enough to swamp the prior. Instead, suppose that the agent considers a credal prior set  $\Delta^{\mathfrak{c}-1}_{\epsilon}$ that satisfies (a)--(c). If $\mathfrak{c}$ is large enough, this would mean that $\text{Vol}(\Delta^{\mathfrak{c}-1}_{\epsilon})$ is much smaller than $\text{Vol}(\Delta^{\mathfrak{c}-1})$.
 \end{example}

\iffalse 
This entails the following. Suppose that in a procedure involving a finite label space $\mathcal{Y}$ -- whose cardinality is some large number $d$ -- we slightly underestimate the ambiguity we face. This may happen if the credal set elicited at the beginning of the analysis is (ever so slightly) misspecified, that is, we select a set $\mathcal{Q}$ that satisfies (a)-(c) in lieu of a ``larger'' $\mathcal{P}$. Often times this will be the case: agents do not realize they face ambiguity, as observed in \cite{berger} and in the de Finetti lecture delivered by Berger at ISBA 2021;  
%There, Berger points out how most people tend to under-report variance; the folklore says by a factor of $3$. 
people simply think that they know more than they actually do. Then, this would results in a (possibly massive) underestimation of the epistemic uncertainty in the results of the analysis, which would potentially translate in catastrophic consequence in downstream tasks.
\fi 

Two remarks are in order. First, in the binary classification setting (that is, when $d=2$), the lack of robustness of the volume highlighted by Theorem \ref{main_theorem} is not an issue since $1-(1-\epsilon/r)^d$ is approximately $1$ only when the cardinality $|\mathcal{Y}|=d$ is large. Second, Theorem \ref{main_theorem} is intimately related to Carl-Pajor's Theorem \citep[Theorem 1]{pajor}; this implies that in the future, more techniques from high-dimensional geometry may become useful in the study of epistemic, and potentially also aleatoric, uncertainties.\footnote{We state (a version of) Carl-Pajor's Theorem in Appendix \ref{hdp}.}

\section{Conclusion}\label{concl}
Credal sets provide a flexible and powerful formalism for representing uncertainty in various scientific disciplines. In particular, uncertainty representation via credal sets can capture different degrees of uncertainty and allow for a more nuanced representation of epistemic and aleatoric uncertainty in machine learning systems. Moreover, the corresponding geometric representation of credal sets as $d$-dimensional polytopes enables a thoroughly intuitive view of uncertainty representation and quantification.  

In this paper, we showed that the volume of a credal set is a sensible measure of epistemic uncertainty in the context of binary classification, as it enjoys many desirable properties suggested in the existing literature. On the other side, the volume forfeits these properties in a multi-class classification setting, despite its intuitive meaningfulness. 

In addition, this work stimulates a fundamental question as to what extent a geometric approach to uncertainty quantification (in ML) is sensible.

This is the first step toward studying the geometric properties of (epistemic) uncertainty in AI and ML. In the future, we plan to explore the geometry of aleatoric uncertainty and introduce techniques from high-dimensional geometry and high-dimensional probability to enhance and deepen the study of EU and AU in the contexts of AI and ML.

\iffalse 
This is just meant as a side note, and will potentially not appear in the final version of the paper:

\begin{theorem}
Let $B_{1,d}$ denote the $d$-dimensional unit euclidean ball, and let $\mathcal{P} \subset B_{1,d}$ be a polytope with $m \in \mathbb{N}$ vertices. Then, we have
\begin{align}
  \frac{\text{Vol}(\mathcal{P})}{\text{Vol}(B_{1,d})} \leq \left(4 \sqrt{\frac{\log m}{d}}   \right)^d. 
\end{align}
\end{theorem}
\fi

\begin{contributions} % will be removed in pdf for initial submission 
					  % (without ‘accepted’ option in \documentclass)
                      % so you can already fill it to test with the
                      % ‘accepted’ class option
Yusuf Sale and Michele Caprio contributed equally to this paper.
\end{contributions}

\begin{acknowledgements} % will be removed in pdf for initial submission,
						 % (without ‘accepted’ option in \documentclass)
                         % so you can already fill it to test with the
                         % ‘accepted’ class option
Michele Caprio would like to acknowledge partial funding by the Army Research Office (ARO MURI W911NF2010080). Yusuf Sale is supported by the DAAD programme Konrad Zuse Schools of Excellence in Artificial Intelligence, sponsored by the Federal Ministry of Education and Research.
\end{acknowledgements}

% References
\bibliography{uai2023-template}

\begin{thebibliography}{47}
\providecommand{\natexlab}[1]{#1}
\providecommand{\url}[1]{\texttt{#1}}
\expandafter\ifx\csname urlstyle\endcsname\relax
  \providecommand{\doi}[1]{doi: #1}\else
  \providecommand{\doi}{doi: \begingroup \urlstyle{rm}\Url}\fi

\bibitem[Abell{\'a}n and Moral(2000)]{abellan2000non}
Joaqu{\'\i}n Abell{\'a}n and Seraf{\'\i}n Moral.
\newblock A non-specificity measure for convex sets of probability
  distributions.
\newblock \emph{International journal of uncertainty, fuzziness and
  knowledge-based systems}, 8\penalty0 (03):\penalty0 357--367, 2000.

\bibitem[Abell{\'a}n and Moral(2003)]{abellan2003building}
Joaqu{\'\i}n Abell{\'a}n and Seraf{\'\i}n Moral.
\newblock Building classification trees using the total uncertainty criterion.
\newblock \emph{International Journal of Intelligent Systems}, 18\penalty0
  (12):\penalty0 1215--1225, 2003.

\bibitem[Abellan and Moral(2003)]{abellan2003maximum}
Joaquin Abellan and Serafin Moral.
\newblock Maximum of entropy for credal sets.
\newblock \emph{International journal of uncertainty, fuzziness and
  knowledge-based systems}, 11\penalty0 (05):\penalty0 587--597, 2003.

\bibitem[Abellán and Klir(2005)]{abellan3}
Joaquín Abellán and George~J. Klir.
\newblock Additivity of uncertainty measures on credal sets.
\newblock \emph{International Journal of General Systems}, 34\penalty0
  (6):\penalty0 691--713, 2005.

\bibitem[Anel(2021)]{anel}
Mathieu Anel.
\newblock \emph{The Geometry of Ambiguity: An Introduction to the Ideas of
  Derived Geometry}, volume~1, pages 505--553.
\newblock Cambridge University Press, 2021.

\bibitem[Augustin et~al.(2014)Augustin, Coolen, De~Cooman, and
  Troffaes]{augustin2014introduction}
Thomas Augustin, Frank~PA Coolen, Gert De~Cooman, and Matthias~CM Troffaes.
\newblock \emph{Introduction to imprecise probabilities}.
\newblock John Wiley \& Sons, 2014.

\bibitem[Ball and Pajor(1990)]{pajor}
Keith Ball and Alain Pajor.
\newblock Convex bodies with few faces.
\newblock \emph{Proceedings of the American Mathematical Society}, 110\penalty0
  (1):\penalty0 225--231, 1990.

\bibitem[Bengs et~al.(2022)Bengs, H{\"u}llermeier, and
  Waegeman]{bengs2022pitfalls}
Viktor Bengs, Eyke H{\"u}llermeier, and Willem Waegeman.
\newblock Pitfalls of epistemic uncertainty quantification through loss
  minimisation.
\newblock In \emph{Advances in Neural Information Processing Systems}, 2022.

\bibitem[Bloch(1996)]{bloch}
Isabelle Bloch.
\newblock Some aspects of {D}empster-{S}hafer evidence theory for
  classification of multi-modality medical images taking partial volume effect
  into account.
\newblock \emph{Pattern Recognition Letters}, 17\penalty0 (8):\penalty0
  905--919, 1996.

\bibitem[Bronevich and Klir(2008)]{bronevich2008axioms}
Andrey Bronevich and George~J Klir.
\newblock Axioms for uncertainty measures on belief functions and credal sets.
\newblock In \emph{NAFIPS 2008-2008 Annual Meeting of the North American Fuzzy
  Information Processing Society}, pages 1--6. IEEE, 2008.

\bibitem[Bronevich and Klir(2010)]{bronevich2010measures}
Andrey Bronevich and George~J Klir.
\newblock Measures of uncertainty for imprecise probabilities: an axiomatic
  approach.
\newblock \emph{International journal of approximate reasoning}, 51\penalty0
  (4):\penalty0 365--390, 2010.

\bibitem[Cai and Lim(2022)]{lim2}
Yuhang Cai and Lek-Heng Lim.
\newblock Distances between probability distributions of different dimensions.
\newblock \emph{{IEEE} Transactions on Information Theory}, 2022.

\bibitem[Caprio et~al.(2023{\natexlab{a}})Caprio, Dutta, Ivanov, Jang, Lin,
  Sokolsky, and Lee]{caprio_IBNN}
Michele Caprio, Souradeep Dutta, Radoslav Ivanov, Kuk Jang, Vivian Lin, Oleg
  Sokolsky, and Insup Lee.
\newblock {Imprecise Bayesian Neural Networks}.
\newblock \emph{arXiv preprint arXiv:2302.09656}, 2023{\natexlab{a}}.

\bibitem[Caprio et~al.(2023{\natexlab{b}})Caprio, Dutta, Sridhar, Jang, Lin,
  Sokolsky, and Lee]{caprio_INN}
Michele Caprio, Souradeep Dutta, Kaustubh Sridhar, Kuk Jang, Vivian Lin, Oleg
  Sokolsky, and Insup Lee.
\newblock {EpiC INN: Epistemic Curiosity Imprecise Neural Network}.
\newblock Technical report, University of Pennsylvania, Department of Computer
  and Information Science, 01 2023{\natexlab{b}}.

\bibitem[Cohn et~al.(2017)Cohn, Kumar, Miller, Radchenko, and Viazovska]{cohn}
Henry Cohn, Abhinav Kumar, Stephen~D. Miller, Danylo Radchenko, and Maryna~S.
  Viazovska.
\newblock The sphere packing problem in dimension 24.
\newblock \emph{Annals of Mathematics}, 185\penalty0 (3):\penalty0 1017--1033,
  2017.

\bibitem[Corani and Zaffalon(2008)]{corani2008learning}
Giorgio Corani and Marco Zaffalon.
\newblock Learning reliable classifiers from small or incomplete data sets: The
  naive credal classifier 2.
\newblock \emph{Journal of Machine Learning Research}, 9\penalty0 (4), 2008.

\bibitem[Corani et~al.(2012)Corani, Antonucci, and
  Zaffalon]{corani2012bayesian}
Giorgio Corani, Alessandro Antonucci, and Marco Zaffalon.
\newblock Bayesian networks with imprecise probabilities: Theory and
  application to classification.
\newblock In \emph{Data Mining: Foundations and Intelligent Paradigms}, pages
  49--93. Springer, 2012.

\bibitem[Couso et~al.(1999)Couso, Moral, and Walley]{couso}
Inés Couso, Serafín Moral, and Peter Walley.
\newblock Examples of independence for imprecise probabilities.
\newblock In \emph{Proceedings of the First International Symposium on
  Imprecise Probabilities and Their Applications (ISIPTA 1999)}, pages
  121--130, 1999.

\bibitem[Cuzzolin(2021)]{cuzzo2}
Fabio Cuzzolin.
\newblock \emph{The Geometry of Uncertainty}.
\newblock Artificial Intelligence: Foundations, Theory, and Algorithms.
  Springer Nature Switzerland, 2021.

\bibitem[Depeweg et~al.(2018)Depeweg, Hernandez-Lobato, Doshi-Velez, and
  Udluft]{depeweg2018decomposition}
Stefan Depeweg, Jose-Miguel Hernandez-Lobato, Finale Doshi-Velez, and Steffen
  Udluft.
\newblock Decomposition of uncertainty in bayesian deep learning for efficient
  and risk-sensitive learning.
\newblock In \emph{International Conference on Machine Learning}, pages
  1184--1193. PMLR, 2018.

\bibitem[Hifi and Yousef(2019)]{hifi}
Mhand Hifi and Labib Yousef.
\newblock A local search-based method for sphere packing problems.
\newblock \emph{European Journal of Operational Research}, 247:\penalty0
  482--500, 2019.

\bibitem[Hora(1996)]{hora1996aleatory}
Stephen~C Hora.
\newblock Aleatory and epistemic uncertainty in probability elicitation with an
  example from hazardous waste management.
\newblock \emph{Reliability Engineering \& System Safety}, 54\penalty0
  (2-3):\penalty0 217--223, 1996.

\bibitem[H{\"u}llermeier and Waegeman(2021)]{hullermeier2021aleatoric}
Eyke H{\"u}llermeier and Willem Waegeman.
\newblock Aleatoric and epistemic uncertainty in machine learning: An
  introduction to concepts and methods.
\newblock \emph{Machine Learning}, 110\penalty0 (3):\penalty0 457--506, 2021.

\bibitem[H{\"u}llermeier et~al.(2022)H{\"u}llermeier, Destercke, and
  Shaker]{hullermeier2022quantification}
Eyke H{\"u}llermeier, S{\'e}bastien Destercke, and Mohammad~Hossein Shaker.
\newblock Quantification of credal uncertainty in machine learning: A critical
  analysis and empirical comparison.
\newblock In \emph{Uncertainty in Artificial Intelligence}, pages 548--557.
  PMLR, 2022.

\bibitem[Hüllermeier(2022)]{eyke2}
Eyke Hüllermeier.
\newblock Quantifying aleatoric and epistemic uncertainty in machine learning:
  Are conditional entropy and mutual information appropriate measures?
\newblock \emph{Available at
  \href{https://arxiv.org/abs/2209.03302}{arxiv:2209.03302}}, 2022.

\bibitem[Jiroušek and Shenoy(2018)]{jiro}
Radim Jiroušek and Prakash~P. Shenoy.
\newblock A new definition of entropy of belief functions in the
  {D}empster–{S}hafer theory.
\newblock \emph{International Journal of Approximate Reasoning}, 92:\penalty0
  49--65, 2018.

\bibitem[Kapoor et~al.(2022)Kapoor, Maddox, Izmailov, and
  Wilson]{kapoor2022uncertainty}
Sanyam Kapoor, Wesley~J Maddox, Pavel Izmailov, and Andrew~Gordon Wilson.
\newblock On uncertainty, tempering, and data augmentation in bayesian
  classification.
\newblock \emph{arXiv preprint arXiv:2203.16481}, 2022.

\bibitem[Kendall and Gal(2017)]{kendall2017uncertainties}
Alex Kendall and Yarin Gal.
\newblock What uncertainties do we need in bayesian deep learning for computer
  vision?
\newblock \emph{Advances in neural information processing systems}, 30, 2017.

\bibitem[Lambrou et~al.(2010)Lambrou, Papadopoulos, and
  Gammerman]{lambrou2010reliable}
Antonis Lambrou, Harris Papadopoulos, and Alex Gammerman.
\newblock Reliable confidence measures for medical diagnosis with evolutionary
  algorithms.
\newblock \emph{IEEE Transactions on Information Technology in Biomedicine},
  15\penalty0 (1):\penalty0 93--99, 2010.

\bibitem[Levi(1980)]{levi2}
Isaac Levi.
\newblock \emph{The Enterprise of Knowledge}.
\newblock London : MIT Press, 1980.

\bibitem[Mosler(2002)]{mosler}
Karl Mosler.
\newblock Zonoids and lift zonoids.
\newblock In \emph{Multivariate Dispersion, Central Regions, and Depth: The
  Lift Zonoid Approach}, volume 165 of \emph{Lecture Notes in Statistics},
  pages 25--78. New York : Springer, 2002.

\bibitem[Pal et~al.(1992)Pal, Bezdek, and Hemasinha]{pal1992uncertainty}
Nikhil~R Pal, James~C Bezdek, and Rohan Hemasinha.
\newblock Uncertainty measures for evidential reasoning i: A review.
\newblock \emph{International Journal of Approximate Reasoning}, 7\penalty0
  (3-4):\penalty0 165--183, 1992.

\bibitem[Pal et~al.(1993)Pal, Bezdek, and Hemasinha]{pal1993uncertainty}
Nikhil~R Pal, James~C Bezdek, and Rohan Hemasinha.
\newblock Uncertainty measures for evidential reasoning ii: A new measure of
  total uncertainty.
\newblock \emph{International Journal of Approximate Reasoning}, 8\penalty0
  (1):\penalty0 1--16, 1993.

\bibitem[Seidenfeld et~al.(2012)Seidenfeld, Schervish, and Kadane]{teddy_sei}
Teddy Seidenfeld, Mark~J. Schervish, and Joseph~B. Kadane.
\newblock Forecasting with imprecise probabilities.
\newblock \emph{International Journal of Approximate Reasoning}, 53\penalty0
  (8):\penalty0 1248--1261, 2012.
\newblock Imprecise Probability: Theories and Applications (ISIPTA'11).

\bibitem[Senge et~al.(2014)Senge, Bösner, Dembczyński, Haasenritter, Hirsch,
  Donner-Banzhoff, and Hüllermeier]{senge_2014_ReliableClassificationLearning}
Robin Senge, Stefan Bösner, Krzysztof Dembczyński, Jörg Haasenritter, Oliver
  Hirsch, Norbert Donner-Banzhoff, and Eyke Hüllermeier.
\newblock Reliable classification: {{Learning}} classifiers that distinguish
  aleatoric and epistemic uncertainty.
\newblock \emph{Information Sciences}, 255:\penalty0 16--29, 2014.

\bibitem[Settles(2009)]{settles2009active}
Burr Settles.
\newblock {Active Learning Literature Survey}.
\newblock Technical report, University of Wisconsin-Madison, Department of
  Computer Sciences, 2009.

\bibitem[Shaker and H{\"u}llermeier(2020)]{shaker2020aleatoric}
Mohammad~Hossein Shaker and Eyke H{\"u}llermeier.
\newblock Aleatoric and epistemic uncertainty with random forests.
\newblock In \emph{Advances in Intelligent Data Analysis XVIII: 18th
  International Symposium on Intelligent Data Analysis, IDA 2020, Konstanz,
  Germany, April 27--29, 2020, Proceedings 18}, pages 444--456. Springer, 2020.

\bibitem[Shannon(1948)]{shannon1948mathematical}
Claude~E Shannon.
\newblock A mathematical theory of communication.
\newblock \emph{The Bell system technical journal}, 27\penalty0 (3):\penalty0
  379--423, 1948.

\bibitem[Smith and Gal(2018)]{smith2018understanding}
Lewis Smith and Yarin Gal.
\newblock Understanding measures of uncertainty for adversarial example
  detection.
\newblock \emph{arXiv preprint arXiv:1803.08533}, 2018.

\bibitem[Varshney(2016)]{varshney2016engineering}
Kush~R Varshney.
\newblock Engineering safety in machine learning.
\newblock In \emph{2016 Information Theory and Applications Workshop (ITA)},
  pages 1--5. IEEE, 2016.

\bibitem[Varshney and Alemzadeh(2017)]{varshney2017safety}
Kush~R Varshney and Homa Alemzadeh.
\newblock On the safety of machine learning: Cyber-physical systems, decision
  sciences, and data products.
\newblock \emph{Big data}, 5\penalty0 (3):\penalty0 246--255, 2017.

\bibitem[Vershynin(2018)]{vershynin2018high}
Roman Vershynin.
\newblock \emph{High-dimensional probability: An introduction with applications
  in data science}, volume~47.
\newblock Cambridge university press, 2018.

\bibitem[Viazovska(2017)]{maryna}
Maryna~S. Viazovska.
\newblock The sphere packing problem in dimension 8.
\newblock \emph{Annals of Mathematics}, 185\penalty0 (3):\penalty0 991--1015,
  2017.

\bibitem[Walley(1991)]{walley}
Peter Walley.
\newblock \emph{{Statistical Reasoning with Imprecise Probabilities}},
  volume~42 of \emph{Monographs on Statistics and Applied Probability}.
\newblock London : Chapman and Hall, 1991.

\bibitem[Walley(1996)]{walley1996inferences}
Peter Walley.
\newblock Inferences from multinomial data: learning about a bag of marbles.
\newblock \emph{Journal of the Royal Statistical Society: Series B
  (Methodological)}, 58\penalty0 (1):\penalty0 3--34, 1996.

\bibitem[Yang et~al.(2009)Yang, Wang, Mi, Lin, and Cai]{yang2009using}
Fan Yang, Hua-zhen Wang, Hong Mi, Cheng-de Lin, and Wei-wen Cai.
\newblock Using random forest for reliable classification and cost-sensitive
  learning for medical diagnosis.
\newblock \emph{BMC bioinformatics}, 10\penalty0 (1):\penalty0 1--14, 2009.

\bibitem[Zaffalon(2002)]{zaffalon2002naive}
Marco Zaffalon.
\newblock The naive credal classifier.
\newblock \emph{Journal of statistical planning and inference}, 105\penalty0
  (1):\penalty0 5--21, 2002.

\end{thebibliography}

\newpage
\onecolumn
\appendix

\appendix
\section{Proofs}\label{proofs}
\begin{proof}[Proof of Proposition \ref{prop-1}]
Let $\mathcal{P},\mathcal{Q} \subset \Delta(\mathcal{Y},\sigma(\mathcal{Y}))$ be credal sets, and assume $|\mathcal{Y}|=2$. Then we have the following.
    \begin{itemize}
        \item $\text{Vol}(\mathcal{P}) \geq 0$ and $\text{Vol}(\mathcal{P}) \leq \text{Vol}(\Delta^{2-1})=\sqrt{2}$. Hence $\text{Vol}(\cdot)$ satisfies A1.
        \item The volume being a continuous functional is a well-known fact that comes from the continuity of the Lebesgue measure, so $\text{Vol}(\cdot)$ satisfies A2.
        \item $\mathcal{Q}\subset \mathcal{P} \implies \text{Vol}(\mathcal{Q}) \leq \text{Vol}(\mathcal{P})$. This comes from the fundamental property of the Lebesgue measure, so $\text{Vol}(\cdot)$ satisfies A3.
        \item Consider a sequence $(\mathcal{P}_n)$ of credal sets on $(\mathcal{Y},\sigma(\mathcal{Y}))$ such that $\lim_{n\rightarrow \infty} [\overline{P}_n(A)-\underline{P}_n(A)]=0$, for all $A\in\sigma(\mathcal{Y})$. Then, this means that there exists $N\in\mathbb{N}$ such that for all $n\geq N$, the geometric representation of $\mathcal{P}_n$ is a subset of the geometric representation of $\mathcal{P}_{n+1}$. In addition, the limiting element of $(\mathcal{P}_n)$ is a (multi)set $\mathcal{P}^\star$ whose elements are all equal to $P^\star$, so its geometric representation is a point and its volume is $0$. Hence, probability consistency is implied by continuity A3, so $\text{Vol}(\cdot)$ satisfies A4'.
        \item The volume is invariant to rotation and translation. This is a well-known fact that comes from the fundamental property of the Lebesgue measure, so $\text{Vol}(\cdot)$ satisfies A7.
    \end{itemize}
Let us now show that the volume operator satisfies sub-additivity A5. Let $\mathcal{Y}=\mathcal{Y}_1\times\mathcal{Y}_2$. In addition, suppose we are in the general case in which $|\mathcal{Y}|=|\mathcal{Y}_1|=|\mathcal{Y}_2|=2$. In particular, let $\mathcal{Y}=\{(y_{1},y_{2}),(y_{3},y_{4})\}$, so that $\mathcal{Y}_1=\{y_{1},y_{3}\}$ and $\mathcal{Y}_2=\{y_{2},y_{4}\}$. Suppose also $y_1\neq y_3$ and $y_2\neq y_4$. Now, pick any probability measure $P$ on $\mathcal{Y}$. In general, we would have that its marginal $\text{marg}_{\mathcal{Y}_1}(P)=P^\prime$ on $\mathcal{Y}_1$ is such that  $P^\prime(y_i)=\sum_j P((y_i,y_j))$. Similarly for marginal $\text{marg}_{\mathcal{Y}_2}(P)=P^{\prime\prime}$ on $\mathcal{Y}_2$. In our case, though, the computation is easier. To see this, fix $y_1$. Then, we should sum over $j$ the probability of $(y_1,y_j)$, $y_j\in\mathcal{Y}_2$. But the only pair $(y_1,y_j)$ is $(y_1,y_2)$. A similar argument holds if we fix $y_3$, or any of the elements of $\mathcal{Y}_2$. Hence, we have that
$$P^\prime(y_1)=P((y_1,y_2))=P^{\prime\prime}(y_2) \quad \text{and} \quad P^\prime(y_3)=P((y_3,y_4))=P^{\prime\prime}(y_4).$$

Let $\mathcal{P}^\prime$ and $\mathcal{P}^{\prime\prime}$ denote the marginal convex sets of probability distributions on $\mathcal{Y}_1$ and $\mathcal{Y}_2$, respectively, and let $\mathcal{P}$ denote the convex set of joint probability distributions on $\mathcal{Y}=\mathcal{Y}_1\times\mathcal{Y}_2$ \citep{couso}. Then, given our argument above, we have that $\text{Vol}(\mathcal{P}) < \text{Vol}(\mathcal{P}^\prime) + \text{Vol}(\mathcal{P}^{\prime\prime}) = 2 \text{Vol}(\mathcal{P})$. So in the general $|\mathcal{Y}|=|\mathcal{Y}_1|=|\mathcal{Y}_2|=2$ case where $y_1\neq y_3$ and $y_2\neq y_4$, the volume is subadditive.
\end{proof}

\begin{proof}[Proof of Proposition \ref{prop-2}]
Immediate from the assumption on the instance of SI.
\end{proof}

\begin{proof}[Proof of Theorem \ref{main_theorem}]
Pick any compact set $\mathcal{P}\subset\mathcal{M}(\Omega,\mathcal{F})$ and any set $\mathcal{Q}$ satisfying (a)-(c). Let $B^d_r \subset \mathbb{R}^d$ denote a generic ball in $\mathbb{R}^d$ of radius $r>0$. Notice that $N^{\text{pack}}_{r-\epsilon}(\mathcal{Q}^\prime)=N^{\text{pack}}_{r-\epsilon}(\mathcal{P})-N^{\text{pack}}_{r-\epsilon}(\mathcal{Q}) \geq 0$ because $\mathcal{P}\supset\mathcal{Q}$. Then, the proof goes as follows
    \begin{align}
        \frac{\text{Vol}(\mathcal{P})-\text{Vol}(\mathcal{Q}^\prime)}{\text{Vol}(\mathcal{P})} &= \frac{\frac{1}{c(r,d,\mathcal{P})}\text{Vol}(\tilde{\mathcal{P}}_r)-\frac{1}{c(r-\epsilon,d,\mathcal{Q}^\prime)}\text{Vol}(\tilde{\mathcal{Q}}^\prime_{r-\epsilon})}{\frac{1}{c(r,d,\mathcal{P})}\text{Vol}(\tilde{\mathcal{P}}_r)} \label{eq4}\\
        &\geq \frac{\text{Vol}(\tilde{\mathcal{P}}_r)-\text{Vol}(\tilde{\mathcal{Q}}^\prime_{r-\epsilon})}{\text{Vol}(\tilde{\mathcal{P}}_r)} \label{eq5}\\
        &=\frac{N^{\text{pack}}_r(\mathcal{P})\text{Vol}(B^d_r)-N^{\text{pack}}_{r-\epsilon}(\mathcal{Q}^\prime)\text{Vol}(B^d_{r-\epsilon})}{N^{\text{pack}}_r(\mathcal{P})\text{Vol}(B^d_r)} \label{eq6}\\
        &=\frac{N^{\text{pack}}_r(\mathcal{P})\text{Vol}(B^d_1)r^d-N^{\text{pack}}_{r-\epsilon}(\mathcal{Q}^\prime)\text{Vol}(B^d_1)(r-\epsilon)^d}{N^{\text{pack}}_r(\mathcal{P})\text{Vol}(B^d_1)r^d} \label{eq7}\\
        &=\frac{N^{\text{pack}}_r(\mathcal{P})r^d-N^{\text{pack}}_{r-\epsilon}(\mathcal{Q}^\prime)(r-\epsilon)^d}{N^{\text{pack}}_r(\mathcal{P})r^d} \nonumber\\
        &=1-\frac{N^{\text{pack}}_{r-\epsilon}(\mathcal{Q}^\prime)}{N^{\text{pack}}_r(\mathcal{P})}\left(1-\frac{\epsilon}{r} \right)^d \nonumber\\
        &\geq 1- \left( 1-\frac{\epsilon}{r} \right)^d, \label{eq8}
    \end{align}
    where \eqref{eq4} comes from equation \eqref{volume_eq}, \eqref{eq5} comes from the fact that $r-\epsilon \leq r \implies c(r-\epsilon,d,\mathcal{Q}^\prime) \geq c(r,d,\mathcal{P})$ by \eqref{decr_c}, \eqref{eq6} comes from $\tilde{\mathcal{P}}_r$ being the union of pairwise disjoint balls of radius $r$, \eqref{eq7} comes from properties of the volume of a ball of radius $r$ in $\mathbb{R}^d$, and \eqref{eq8} comes from property (c) of $\mathcal{Q}$.
\end{proof}

\section{High-dimensional probability}\label{hdp}
 Since Theorem \ref{main_theorem} in Section \ref{instab} is intimately related with Carl-Pajor's Theorem \citep{pajor}, we state (a version) of the theorem here. 
 
\begin{theorem}[Carl-Pajor]
Let $B_{1,d}$ denote the $d$-dimensional unit euclidean ball, and let $\mathcal{P} \subset B_{1,d}$ be a polytope with $m \in \mathbb{N}$ vertices. Then, we have
\begin{align}
  \frac{\text{Vol}(\mathcal{P})}{\text{Vol}(B_{1,d})} \leq \left(4 \sqrt{\frac{\log m}{d}}   \right)^d. 
\end{align}
\end{theorem}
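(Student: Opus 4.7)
The plan is to derive the Carl--Pajor inequality by reducing the volume ratio to a mean width estimate and then controlling the mean width of a polytope via a maximum of Gaussian random variables. The starting point is \emph{Urysohn's inequality}, which asserts that for any convex body $K \subset \mathbb{R}^d$,
$$\left( \frac{\text{Vol}(K)}{\text{Vol}(B_{1,d})} \right)^{1/d} \leq \frac{w(K)}{2}, \qquad w(K) = 2\,\mathbb{E}_\theta\!\left[\,h_K(\theta)\,\right],$$
where $\theta$ is uniformly distributed on the unit sphere $S^{d-1}$ and $h_K(\theta)=\sup_{x\in K}\langle \theta,x\rangle$ is the support function. Applied to $K=\mathcal{P}$, and using that a linear functional on a polytope attains its maximum at a vertex, this reduces the task to bounding $\mathbb{E}_\theta\!\left[\max_{1\leq i \leq m} \langle \theta, v_i \rangle\right]$, where $v_1,\ldots,v_m$ are the vertices of $\mathcal{P}$.

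The next step is the standard Gaussian trick. If $g\sim\mathcal{N}(0,I_d)$, then $\|g\|$ and the direction $\theta=g/\|g\|$ are independent, so
$$\mathbb{E}_\theta\!\left[\,\max_{i} \langle \theta, v_i\rangle\,\right] \;=\; \frac{\mathbb{E}\!\left[\,\max_i \langle g, v_i\rangle\,\right]}{\mathbb{E}[\|g\|]}.$$
Since $\mathcal{P}\subset B_{1,d}$, each $\|v_i\|\leq 1$, so the numerator is the expected maximum of $m$ centered Gaussian variables each of variance at most $1$, which by the classical moment-generating-function bound is at most $\sqrt{2\log m}$. For the denominator, I would use $(\mathbb{E}\|g\|)^2 \geq \mathbb{E}\|g\|^2 - \mathrm{Var}(\|g\|) \geq d-1 \geq d/2$ (valid for $d\geq 2$; the $d=1$ case is trivial), so that $\mathbb{E}[\|g\|]\geq \sqrt{d/2}$.

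Combining these two estimates via Urysohn's inequality yields
$$\left( \frac{\text{Vol}(\mathcal{P})}{\text{Vol}(B_{1,d})} \right)^{1/d} \;\leq\; \mathbb{E}_\theta[\,h_\mathcal{P}(\theta)\,] \;\leq\; \frac{\sqrt{2\log m}}{\sqrt{d/2}} \;=\; 2\sqrt{\frac{\log m}{d}},$$
and raising to the $d$-th power gives the claim (in fact with a constant $2$, so the stated $4$ leaves comfortable slack for coarser estimates of $\mathbb{E}\|g\|$ or of the sub-Gaussian maximum). The main obstacles are conceptual rather than computational: invoking the correct geometric inequality (Urysohn, which itself relies on the Brunn--Minkowski inequality and the rotational symmetrization of $K$ into a Euclidean ball of equal mean width) and recognizing that the support function of a polytope is a maximum over finitely many linear functionals, which is precisely the structure that makes the Gaussian maximal inequality deliver the $\sqrt{\log m}$ factor. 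The constant-tracking in $\mathbb{E}[\|g\|]$ and in the maximal inequality is routine once these two ingredients are in place.
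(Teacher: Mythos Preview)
The paper does not actually prove this theorem; it only \emph{states} it in Appendix~B as background, with a citation to \citet{pajor} and a pointer to \citet{vershynin2018high}. So there is no proof in the paper to compare against.

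Your argument is correct and is precisely the standard modern route (this is how the result is presented, for instance, in Vershynin's book that the paper cites). The two ingredients---Urysohn's inequality to pass from the volume ratio to the half-mean-width $\mathbb{E}_\theta[h_{\mathcal{P}}(\theta)]$, and the Gaussian maximal inequality combined with the polar decomposition $g=\|g\|\cdot\theta$ to bound that quantity by $\sqrt{2\log m}/\mathbb{E}\|g\|$---are exactly the right ones, and your constant-tracking is fine: the Gaussian Poincar\'e inequality indeed gives $\operatorname{Var}(\|g\|)\le 1$, hence $\mathbb{E}\|g\|\ge\sqrt{d-1}\ge\sqrt{d/2}$ for $d\ge 2$. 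As you note, this even yields the sharper constant $2$ in place of $4$. The only cosmetic point worth mentioning is that Urysohn's inequality is usually stated for convex bodies with nonempty interior; if $\mathcal{P}$ is degenerate the left-hand side vanishes and the bound is trivial, so no generality is lost.
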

For further results connecting high-dimensional probability and data science, see \cite{vershynin2018high}.

\end{document}